\newtheorem{theorem}{Theorem}
\newtheorem{lemma}{Lemma}
\newtheorem{proof}{Proof}
\definecolor{light-gray}{rgb}{0.8, 0.8, 0.8}
\definecolor{comment-green}{rgb}{0.435, 0.576, 0.106}
\definecolor{prompt-blue}{HTML}{2596be}
\definecolor{code-function}{HTML}{379fbe}
\definecolor{code-function}{HTML}{693da8}  
\definecolor{code-syntax}{HTML}{0060b1}
\definecolor{code-constant}{HTML}{d86001}
\definecolor{prompt-gray}{HTML}{a7a7a7}
\definecolor{highlight}{HTML}{f8f9cb}
\definecolor{highlight}{HTML}{e3eeff}  
\definecolor{code-perception}{HTML}{2ecc71}
\definecolor{code-control}{HTML}{ff9900}
\definecolor{code-undefined}{HTML}{ff0000}
\NewDocumentCommand{\code}{v}{%
\texttt{\small{\textcolor{code-syntax}{#1}}}%
}
\title{Reinforcement Learning with Foundation Priors: Let the Embodied Agent Efficiently Learn on Its Own}
\author{Weirui Ye$^1$$^2$$^3$ \quad Yunsheng Zhang$^2$$^3$ \quad Haoyang Weng$^1$ \quad Xianfan Gu$^2$ \quad Shengjie Wang$^1$$^2$$^3$ \\
\textbf{Tong Zhang}$^1$$^2$$^3$ \quad \textbf{Mengchen Wang}$^1$ \quad \textbf{Pieter Abbeel}$^{4}$ \quad \textbf{Yang Gao}$^1$$^2$$^3$ \thanks{Corresponding author. Emails: ywr20@mails.tsinghua.edu.cn; gaoyangiiis@tsinghua.edu.cn.} \\
$^1$Tsinghua University, $^2$Shanghai Qi Zhi Institute \\
$^3$Shanghai Artificial Intelligence Laboratory, $^4$UC Berkeley
}
\begin{document}
\maketitle


\begin{abstract}
Reinforcement learning (RL) is a promising approach for solving robotic manipulation tasks.
However, it is challenging to apply the RL algorithms directly in the real world.
For one thing, RL is data-intensive and typically requires millions of interactions with environments, which are impractical in real scenarios. 
For another, it is necessary to make heavy engineering efforts to design reward functions manually. 
To address these issues, we leverage foundation models in this paper. 
We propose Reinforcement Learning with Foundation Priors (RLFP) to utilize guidance and feedback from policy, value, and success-reward foundation models.
Within this framework, we introduce the Foundation-guided Actor-Critic (FAC) algorithm, which enables embodied agents to explore more efficiently with automatic reward functions.
The benefits of our framework are threefold: (1) \textit{sample efficient}; (2) \textit{minimal and effective reward engineering}; (3) \textit{agnostic to foundation model forms and robust to noisy priors}. Our method achieves remarkable performances in various manipulation tasks on both real robots and in simulation. Across 5 dexterous tasks with real robots, FAC achieves an average success rate of 86\% after one hour of real-time learning. 
Across 8 tasks in the simulated Meta-world, FAC achieves 100\% success rates in 7/8 tasks under less than 100k frames (about 1-hour training), outperforming baseline methods with manual-designed rewards in 1M frames. 
We believe the RLFP framework can enable future robots to explore and learn autonomously in the physical world for more tasks. The website of the project is \url{https://yewr.github.io/rlfp}.
\end{abstract}

\keywords{Reinforcement Learning, Foundation Models, Robotics} 

\section{Introduction}
\label{sec:intro}
Reinforcement Learning (RL) has achieved remarkable success in various domains, including video games \citep{muzero, efficientzero, alphastar}, simulated robotics \citep{dreamer-v2}, and embodied agents for decades, such as real robots \citep{barto1983neuronlike, mahadevan1992automatic, daydreamer, yang2023robot, fish}.
However, current RL algorithms encounter two primary challenges: sample efficiency and heavy reward engineering, which hinder deployment in the real world. For example, researchers \citep{muzero, alphazero} require millions of data to master games or solve simulated robotics tasks \citep{drq-v2, dreamerv3, tdmpcv2}. Such amounts of data are unaffordable in real. And they also depend on manually designed rewards, which is overly burdensome. Therefore, it is essential to address these two issues.

\begin{figure}
\begin{center}
\vskip -1.8cm
\includegraphics[width=0.5\textwidth]{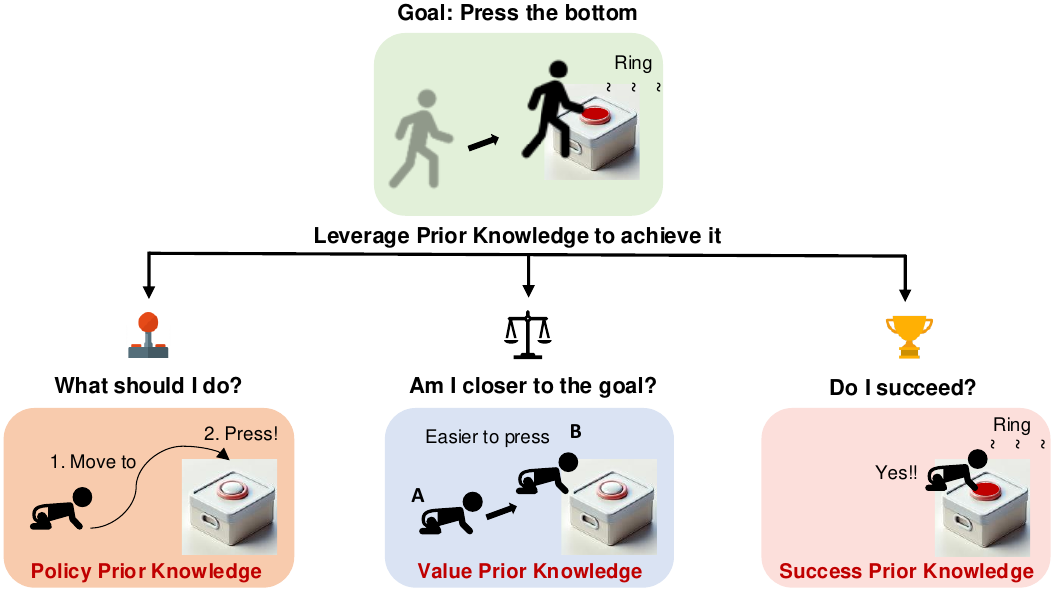}
\end{center}
\vskip -.25cm
\caption{An example of how human solves tasks under the policy, value, and success-reward prior knowledge. The proposed \textbf{Reinforcement Learning from Foundation Priors} framework utilizes the corresponding foundation models to acquire prior knowledge.}
\vskip -.5cm
\label{fig:intro}
\end{figure}


Humans acquire skills through minimal interactions with the environment by leveraging the innate abilities and abundant commonsense accumulated in daily life. They start from reasonable behaviors with fewer aimless explorations, make adjustments and corrections, and reinforce successful behaviors. 
We take the baby pressing bottom as an example to illustrate this learning paradigm, which is commonly used in psychology \citep{meltzoff1988infant, meltzoff1995understanding}. As shown in Fig. \ref{fig:intro}, a baby named Alice is presented with a novel toy box with a button. The baby observes an adult pressing the button, causing the box to light and make sounds. Alice knows the behavior to solve the task, e.g., pressing the button to activate the toy. She realizes that she can make it easier by positioning the hand closer to the button. Once she sees the toy box light up and hears the sound, she will reinforce the successful trial and repeat it.

Inspired by such a learning paradigm, we explore various foundation models to provide learning signals and enhance learning efficiency. Then two fundamental challenges arise: (1) what is the concrete form to present prior knowledge for RL; (2) how to leverage the corresponding prior knowledge effectively for downstream tasks. 
Based on the example shown in Fig. \ref{fig:intro}, three kinds of prior knowledge answer the question: what should I do now? am I closer to the goal? did I succeed? These prior knowledge are aligned with the core concepts well in the Markov Decision Process (\textbf{MDP}), namely the policy function, the value function, and the success-reward function. Fortunately, the great success of the foundation models in natural language processing \citep{transformer, bert, gpt-3, gpt-4} and computer vision \citep{vit, clip, dalle-2, sam} makes it possible to acquire considerable and informative prior knowledge.

Consequently, for systematically utilizing the abundant prior knowledge to facilitate the embodied agent efficiently learning on its own, we introduce the Reinforcement Learning from Foundation Priors (\textbf{RLFP}) to leverage policy, value, and success-reward prior knowledge. 
The policy prior gives a warm start behavior of the agent; the value prior informs to reach better states, and the success-reward prior gives the final success feedback.

To verify the efficacy of RLFP, we instantiate an actor-critic algorithm, named Foundation-guided Actor-Critic (\textbf{FAC}), inspired by some works about building foundation models \citep{vip, unipi, seer}. We conduct experiments on real robots and in simulation, and extensive ablations are made in simulation. RLFP demonstrates three key benefits: (1) \textit{Sample efficient learning}. Across the 5 tasks on real robots, FAC can achieve 86\% success rates after 1 hour of real-time learning. Across the 8 tasks in the simulated Meta-world, FAC can achieve 100\% success rates in 7/8 tasks under less than 100k frames (about 1 hour of training). It surpasses baseline methods that rely on manually designed rewards over 1M frames. (2) \textit{Minimal and effective reward engineering}. The reward function is derived from the value and success-reward prior knowledge, eliminating the need for human-specified dense rewards or teleoperated demonstrations. (3) \textit{Agnostic to prior foundation model forms and robust against noisy priors}. FAC demonstrates resilience under quantization errors in simulations.
To ensure high efficiency and performance, we utilize several well-trained foundation models or fine-tuning foundation models. The contributions are:
\begin{itemize}
    \item We propose the Reinforcement Learning with Foundation Priors (RLFP) framework. It systematically introduces three priors that are essential to embodied agents, and suggests how to leverage the existing foundation models as the priors.
    \item We propose the Foundation-guided Actor-Critic (FAC), an RL algorithm under the RLFP framework that utilizes the policy, value, and success-reward prior knowledge. 
    \item Empirical results demonstrate the remarkable performances of FAC. 
    The ablations underscore the importance of each priors and validate the robustness against prior qualities.
\end{itemize}

\section{Related Work}
\label{sec:related_works}
\textbf{Foundation Models for Policy Learning.}
The ability to leverage generalized knowledge from large and varied datasets has been proved in the fields of CV and NLP. In embodied AI, researchers attempt to learn universal policies based on large language models (LLMs) or vision-language models (VLMs). Approaches include training large transformers through imitation learning \citep{rt-v1, rt-v2, gato, rosie}, offline RL \citep{q-transformer}, or fine-tuning pre-trained VLMs for downstream tasks \citep{yang2023robot}. Others use LLMs or VLMs for reasoning and control based on language descriptions \citep{di2023towards, huang2022inner, saycan, palm-e, wu2023tidybot, singh2023progprompt, cliport, vila}. These methods rely on human teleoperation for data collection, but scaling this is difficult. Some works generate code policies \citep{cap, voxposer} or predict future videos for action generation \citep{unipi, ho2022imagen}, though they struggle with robustness due to limited interaction with environments.

\textbf{Foundation Models for Representation Learning.}
In addition to learning policies from foundation models, some researchers focus on extracting universal representations for downstream tasks. Several works leverage pre-trained visual representations to initialize perception encoders or extract latent image states \citep{karamcheti2023language, rrl, majumdar2023we, r3m}, while others use pre-trained LLMs or VLMs for linguistic instruction encoding \citep{shridhar2023perceiver, nair2022learning, jiang2022vima}. Researchers have also explored applying LLMs/VLMs for universal reward or value representation in RL, such as language-conditioned reward models \citep{minedojo, lorel, mahmoudieh2022zero} and universal goal-conditioned value functions trained on large-scale videos \citep{vip}. Some approaches acquire value and success signals from human demonstrations \cite{eteke2020reward, wu2020squirl}, while others generate rewards based on value differences to avoid manual reward design \citep{xiong2024adaptive, yang2023robot}. However, these methods face limitations, including inefficient exploration due to a lack of policy-side guidance and unstable optimization from continuous reward signals with high error. Our framework addresses these issues by foundation priors, orthogonal to existing data-efficient RL algorithms \citep{popov2017data, drq-v2, efficientzero, ye2022become, wang2024efficientzero, hafner2023mastering}.

\section{Method}
\label{sec:method}
This paper aims to enable embodied agents to efficiently learn various tasks autonomously. In Sec. \ref{sec:method:flfp}, we formulate the proposed Reinforcement Learning with Foundation Priors (\textbf{RLFP}) framework. Next, in Sec. \ref{sec:method:fac}, we present the Foundation-guided Actor-Critic (\textbf{FAC}) algorithm based on RLFP framework. Finally, Sec. \ref{sec:method:acquire_prior} explains how to obtain the foundation models used in FAC.

\subsection{Reinforcement Learning with Foundation Priors}
\label{sec:method:flfp}
We model the tasks for embodied agents as the Goal-Conditioned Markov Decision Processes (GCMDP) $\mathcal{G}$: $\mathcal{G} = (\mathcal{S}, \mathcal{A}, \mathcal{P}, \mathcal{R}, {\mathcal{T}}, \gamma)$. $\mathcal{S} \in \mathbb{R}^{m}$ denotes the state. 
$\mathcal{A}$ is the action space, which is the continuous delta movement of the end effector in this work. $\mathcal{P}$ is the transition probability function.
$\mathcal{T}$ is the task identifier. $\mathcal{R}$ denotes the rewards. 
$\gamma$ is the discounting factor, equal to 0.99 in the work.
To learn efficiently and automatically, we propose the Reinforcement Learning from Foundation Priors (\textbf{RLFP}) framework by leveraging the policy, value, and success-reward priors. 

Here we demonstrate how we formulate the priors in RLFP. Back to the case of Alice in Fig. \ref{fig:intro}, the commonsense of behavior can be formulated as a goal-conditioned policy function, $M_\mathcal{\pi}(s, \mathcal{T}): \mathcal{S} \times \mathcal{T} \rightarrow \mathcal{A}$. The prior knowledge that the state closer to the button is closer to success can be formulated as the value function $M_\mathcal{V}(s, \mathcal{T}): \mathcal{S} \times \mathcal{T} \rightarrow \mathbb{R}^{1}$. The ability to recognize the success state can be formulated as the 0-1 success-reward function $M_\mathcal{R}(s, \mathcal{T}): \mathcal{S} \times \mathcal{T} \rightarrow \{0, 1\}$, which equals 1 only if the task succeeds. 
We assume the success-reward prior is relatively precise, given the simplicity of binary classification in determining success. The value and policy prior knowledge are noisier. 
The RLFP framework is to solve an expansion of $\mathcal{G}$, termed $\mathcal{G^{'}} = (\mathcal{G}, \mathcal{M})$, where $\mathcal{M}$ is the foundation model set that represents various foundation prior knowledge. Here, $M_\mathcal{\pi}, M_\mathcal{V}, M_\mathcal{R} \in \mathcal{M}$.

Compared to vanilla RL, all the signals for the RLFP come from the foundation models. The vanilla RL relies on uninformative trial and error explorations and manually designed reward functions. It is not only of poor sample efficiency but also requires much human reward engineering. Instead, in RLFP, prior knowledge from the foundation model set $\mathcal{M}$ provides guidance or feedback on policy, value, and success-reward, enabling more automatic and effective task resolution.

\subsection{Foudation-guided Actor-Critic}
\label{sec:method:fac}
Under the proposed RLFP framework, we instantiate an actor-critic algorithm named Foundation-guided Actor-Critic (FAC), demonstrating how to inject the three priors into RL algorithms.

\textbf{Guided by Success-reward Signals.}
We consider the task as MDP $\mathcal{G}_1$ with 0-1 success rewards, where $\mathcal{R}_{\mathcal{G}_1} = M_\mathcal{R}(s, \mathcal{T}) \in \{0, 1\}$. Inspired by how humans learn from successful trials, we propose a success buffer to store the ``successful'' trajectories identified by $M_{\mathcal{R}}$. Each time the actor $\pi_{\phi}$ updates via policy gradient, it also imitates samples from the success buffer $\mathcal{D}_{\text{succ}}$ (if available). The objective is $\mathcal{L}_{\text{succ}}(\phi) = \textbf{KL}(\pi_{\phi}(s_t), \mathcal{N}(a_t, \hat{\sigma}^2)), s_t, a_t \sim \mathcal{D}_{\text{succ}}$, where $\hat{\sigma}$ is the standard deviation. 

\textbf{Guided by Policy Regularization.} To encourage efficient explorations, we regularize the actor $\pi_\phi$ by the policy prior from $M_{\pi}(s, \mathcal{T})$.
Assuming the prior follows Gaussian distributions, the regularization term is $\mathcal{L}_{\text{reg}}(\phi) = \text{KL}(\pi_\phi, \mathcal{N}(M_{\pi}(s_t, \mathcal{T}), \hat{\sigma}^2))$, which is commonly used in other algorithms \citep{rot, modem-v2}. The bias introduced by the policy prior is bounded, shown in Theorem \ref{theorem:policyv2}. 

\begin{wrapfigure}{R}{0.6\textwidth}
  \begin{center}
  \vskip -.5cm
    \includegraphics[width=\linewidth]{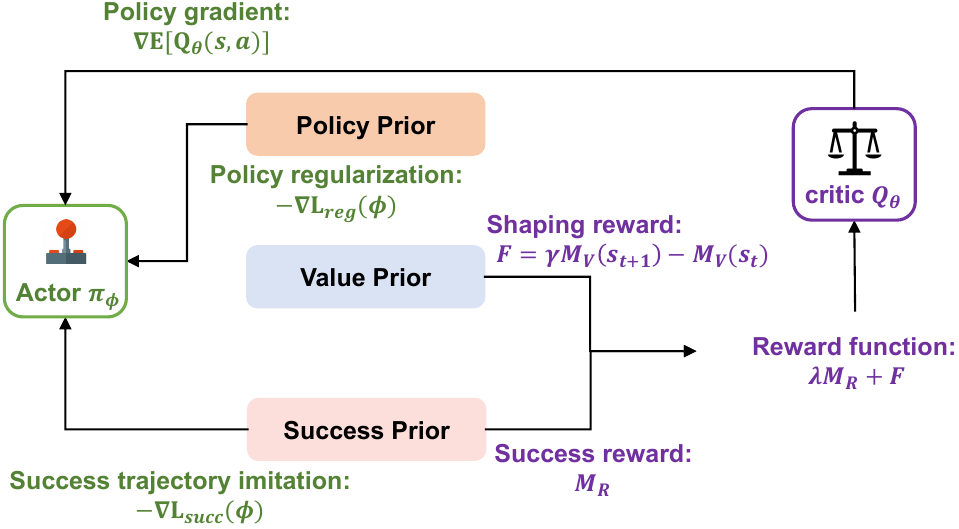}
  \end{center}
  \vskip -.3cm
  \caption{
  \textbf{The overview of Foundation-guided Actor-Critic.} In FAC, rewards are derived from foundation success rewards and value shaping. Besides policy gradient updates, the actor is trained using prior policy regularization and success trajectory imitation.}
  \label{fig:fac}
  \vskip -0.3cm
\end{wrapfigure}

\textbf{Guided by Reward-shaping from Value Prior.} 
Noisy policy prior can mislead agents to undesirable states, so we propose using the value model $M_{\mathcal{V}}$ to guide exploration and avoid unpromising states. While initializing and fine-tuning with $M_{\mathcal{V}}(s, \mathcal{T})$ is a natural approach, it suffers from catastrophic forgetting.
Instead, we employ the \textbf{reward-shaping} technique \citep{reward-shaping} using the potential-based function $F(s, s', \mathcal{T}) = \gamma M_{\mathcal{V}}(s', \mathcal{T}) - M_{\mathcal{V}}(s, \mathcal{T})$, where $\gamma$ is the discount factor. Since $M_\mathcal{V}$ estimates state values, $F$ measures the value increase from $s$ to $s'$. This shaping reward is positive when $s'$ is better than $s$ and shares the same optimal solution as the 0-1 success-reward MDP $\mathcal{G}_1$. Proof and details are in App. \ref{app:reward_shaping}.

\textbf{Foundation-guided Actor-Critic.} 
In summary, we deal with a new MDP $\mathcal{G}_2$, where $\mathcal{R}_{\mathcal{G}_2} = \lambda M_\mathcal{R} + F$, with $\lambda$ (set to 100) emphasizing success feedback. We train the agent using DrQ-v2 \citep{drq-v2}, a variant of Actor-Critic, and call the proposed method Foundation-guided Actor-Critic (\textbf{FAC}). As shown in Fig. \ref{fig:fac}, FAC leverages foundation policy guidance and an automatic reward function, enabling the agent to efficiently learn from abundant prior knowledge. The objectives of FAC are detailed in Eq. (\ref{eq:actor_fac}), where tradeoff parameters $\alpha$ and $\beta$ are both set to 1, $y$ is the n-step TD target, and $Q_{\Bar{\theta}}$ is the target network. We use clipped double Q-learning \citep{double-q} to reduce overestimation.

\begin{equation}
    \label{eq:actor_fac}
    \begin{aligned}
        \mathcal{L}_{\text{actor}}(\phi) &= - \mathbb{E}_{s_t \sim \mathcal{D}}\left[ \min_{k=1,2} Q_{\theta_k}(s_t, a_t) \right] + \alpha \mathcal{L}_{\text{succ}} + \beta \mathcal{L}_{\text{reg}}; a_t \sim \pi_{\phi}(s_t) \\
        \mathcal{L}_{\text{critic}}(\theta) &= \mathbb{E}_{s_t \sim \mathcal{D}}\left[ (Q_{\theta_k}(s_t, a_t) - y)^2 \right]; y = \sum_{i=0}^{n-1} \gamma^i r_{t+i} + \gamma^n \min_{k=1,2} Q_{\Bar{\theta}_k}(s_{t+n}, a_{t+n})
    \end{aligned}
\end{equation}

\subsection{Acquiring Foundation Prior in FAC.}
\label{sec:method:acquire_prior}
We focus on leveraging Foundation Priors in RL, not building large-scale foundation models, though we consider it an exciting future direction. For strong performance, we recommend using well-trained or pre-trained models.
In this work, we use existing models as proxy foundation models. GPT-4V serves as the success-reward model $M_{\mathcal{R}}$ for real tasks, but due to its poor performance in simulations, we use ground-truth 0-1 success rewards in sim and distill a noisy success reward function for ablations.
For value prior, we use the VIP model \citep{vip}, which predicts value based on current and goal image observations. The policy prior $M_{\mathcal{\pi}}$ is built using code generation \citep{cap, voxposer} or video diffusion models \citep{unipi}. Details are in the next section and in App. \ref{app:acquring_prior_real} and App. \ref{app:acquring_prior_sim}.

\begin{figure}[t]
\begin{center}
\vskip -1.5cm
\includegraphics[width=0.95\textwidth]{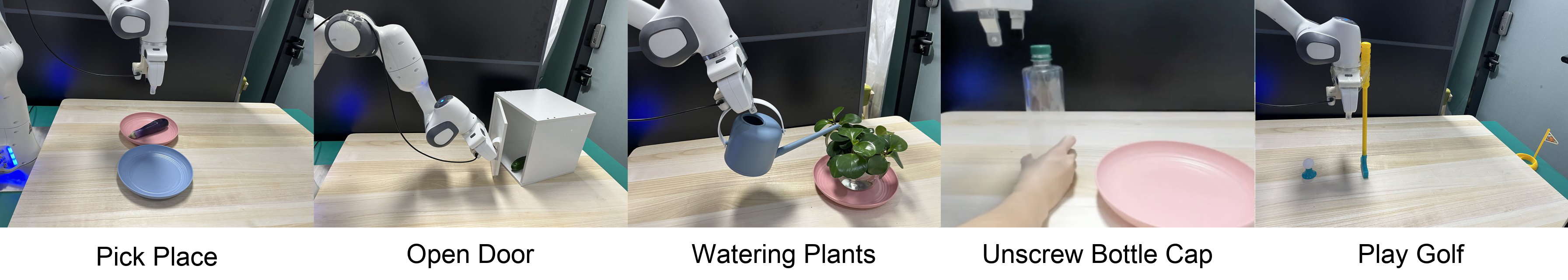}
\end{center}
\vskip -.3cm
\caption{Five tasks on real robots, demonstrating the efficiency and accuracy of FAC in real.}
\label{fig:tasks_on_robots}
\vskip -.5cm
\end{figure}

\section{Experiments}
\label{sec:experiments}

In this section, we provide detailed evaluations of the Foundation-guided Actor-Critic (FAC) on robotics manipulation tasks in both real-world and simulated environments. We also examine the impact of foundation prior knowledge through ablations in simulation, focusing on sample efficiency and robustness. Our experiments aim to answer the following questions: \textbf{(a)} How sample-efficient is FAC (Sec. \ref{sec:experiments:real} and Sec. \ref{sec:experiments:sim}); \textbf{(b)} What is the significance of each foundation prior (Sec. \ref{sec:experiments:ablation}); \textbf{(c)} How does the quality of the foundation model affect FAC's performance (Sec. \ref{sec:experiments:ablation}). Additional ablations and running time analysis are in App. \ref{app:more_ablation}. Demo videos are included in the supplementary.

\subsection{Manipulation tasks on Real Robots}
\label{sec:experiments:real}
\textbf{Experimental Setup.} We set up a real-world tabletop environment using a Franka Emika Panda robot with a 7-DoF arm and a 1-DoF parallel jaw gripper. Observations are collected from RGB images captured by a fixed external camera and a wrist-mounted camera. We designed five dexterous manipulation tasks to evaluate FAC, shown in Fig. \ref{fig:tasks_on_robots}. The agent learns for one hour per task, except for "Pick Place," which trains for 30 minutes. The number of real-world trajectories collected for Pick Place, Open Door, Watering Plants, Unscrew Bottle Cap, and Play Golf are 40, 75, 60, 50, and 115, respectively. These tasks highlight the need for leveraging prior knowledge for accurate manipulation, which can be challenging for standard RL. The Franka Arm's starting position is fixed for each task, but object positions vary within a predefined range. We evaluate each task across 10 trajectories with slight environment variations. Additional training details are in App. \ref{app:exp_details}.

\textbf{Acquring Foundation Prior.}
(1) Prior Success-Reward: We use GPT-4V to determine success-rewards, evaluating only the final observation of each task. In a test of 20 trajectories per task, GPT-4V demonstrated high accuracy, with no false negatives or positives, except in the Watering Plants task, which had a 25\% false-positive rate due to the challenge of correctly orienting the spout. Overall, GPT-4V is an effective proxy for success-reward prior knowledge.
(2) Prior Policy: We use GPT-4V to generate code for initial states, building on previous work \citep{cap, voxposer, saycan} by providing primitive low-level skills like move\_to, grasp, release, and rotate\_anticlockwise. The generated code, based on the robot's state, outputs prior actions, allowing us to apply KL loss between the policy and prior actions in non-initial states.
(3) Prior Value: We use the VIP model \citep{vip}, a universal value model trained on large-scale datasets. By inputting goal and current observations, we infer the value of states. Detailed prompts, generated policies, and prior settings are in App. \ref{app:acquring_prior_real}.

\begin{wrapfigure}{R}{0.4\textwidth}
  \begin{center}
  \vskip -.5cm
    \includegraphics[width=\linewidth]{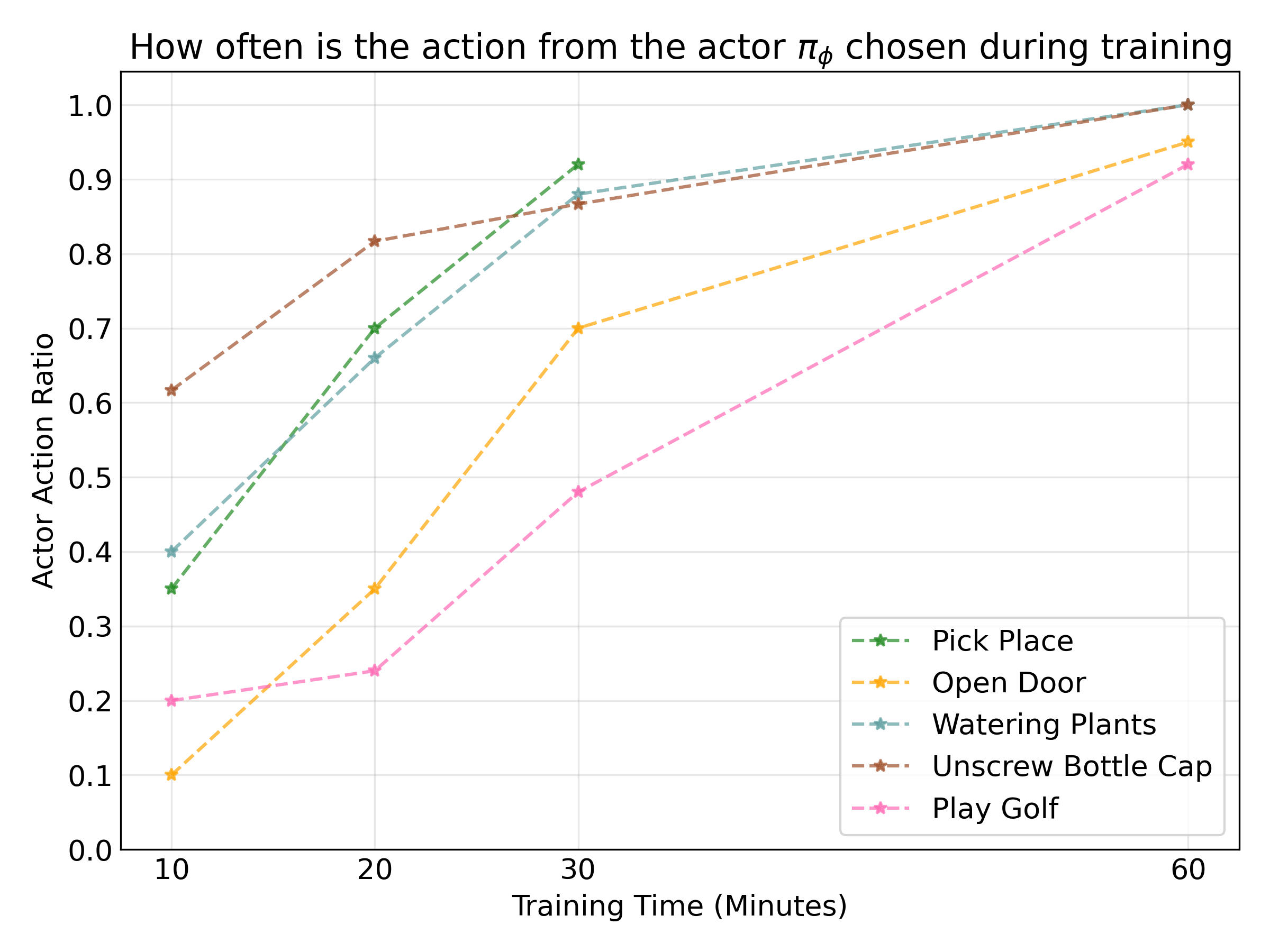}
  \end{center}
  \vskip -0.5cm
  \caption{
  During training, the agent progressively favors actions from the actor, reducing reliance on the prior policy.}
  \label{fig:better_action}
  \vskip -0.5cm
\end{wrapfigure}

\textbf{Efficient and Safe Exploration on Real Robots.} To achieve higher sample efficiency on real robots, we choose high update-to-data (UTD) ratios with layer normalization in all MLP layers \citep{ball2023efficient}. To achieve safer exploration, we introduce two key modifications. Firstly, we warm up the learning by taking the prior action to gather the first 10 trajectories. Secondly, under the guidance of critics, we selectively choose the superior action $a^* = a_i$ from either the actor $\pi_\phi$ or the policy prior $M_{\pi}$, where
$i = \arg\max_{i \in {1, 2}} \left[ \min_k Q_{\theta_k}(s, a_i)\right], a_1 \sim \pi_\phi, a_2 \sim M_{\pi}$. As training progresses, the agent increasingly opts for the actor's actions over the prior, shown in Fig. \ref{fig:better_action}. For the actor, we maintain a constant standard deviation of 0.1.

\begin{table}[h]
    \vskip -1.5cm
    \centering
    \caption{\label{tab:real_robots} \textbf{Quantitative Results of FAC during One-Hour Learning on the Franka Arm.} The reported results are based on 10 evaluation trials. FAC achieves 86\% success rate on average after one hour of training. This performance notably surpasses the code policy prior, underscoring its efficiency.} 
    \begin{tabular}{l|ccccc|c}
        \toprule
        \textbf{Task} (Succ.) & \textbf{Pick Place} & \textbf{Open Door} & \textbf{Watering.} & \textbf{Unscrew.} & \textbf{Play Golfs} & \textbf{Avg.} \\
        \midrule
        \textbf{Vanilla RL} & 0.00 & 0.00 & 0.00 & 0.00 & 0.00 & 0.00 \\
        \textbf{Code Policy} & 0.30 & 0.10 & 0.30 & 0.20 & 0.20 & 0.22 \\
        \textbf{FAC} & 1.00 & 0.90 & 0.80 & 0.90 & 0.70 & \textbf{0.86} \\
        \bottomrule
    \end{tabular}
    \vskip -.5cm
\end{table}

\begin{wrapfigure}{R}{0.55\textwidth}
  \begin{center}
  \vskip -.5cm
    \includegraphics[width=\linewidth]{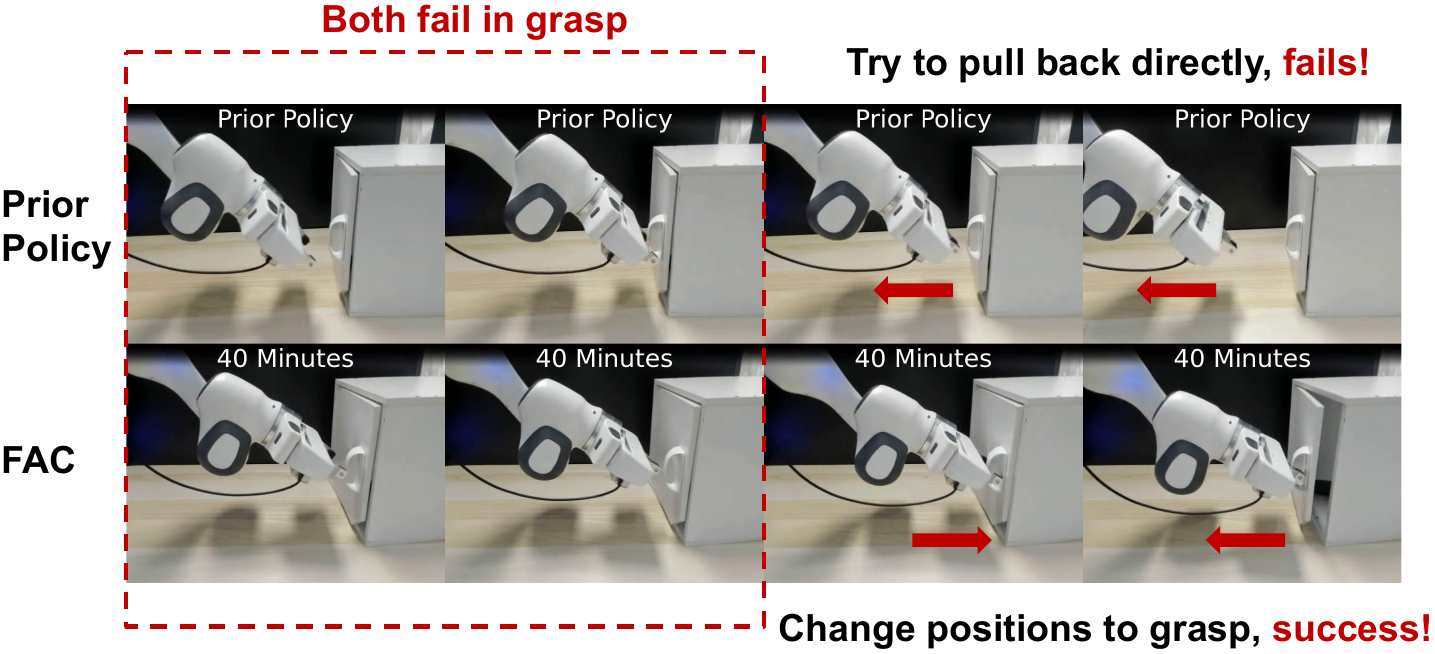}
  \end{center}
  \vskip -.3cm
  \caption{
  Prior policy attempts to open the door without a successful grasp, whereas FAC persistently tries to secure the handle before pulling back the arm.
  }
  \label{fig:door-open}
  \vskip -0.5cm
\end{wrapfigure}

\textbf{Baselines and Performance Analysis in Real.}
We compared our method to two baselines on real robots: (1) Vanilla RL using DrQ-v2 \citep{drq-v2} with manually designed rewards, and (2) Policy Prior using the GPT-4V-generated code policy. Results are shown in Tab. \ref{tab:real_robots}. Vanilla RL failed in all tasks after one hour of training due to excessive exploration. The code policy prior achieved an average success rate of 22\%, benefiting from VLM's commonsense knowledge. Notably, FAC, guided by foundation priors, achieved an impressive average success rate of 86\% across all tasks.

The learned policy in FAC adapts and refines its approach based on the prior policy to successfully complete tasks, as shown in Fig. \ref{fig:door-open}. While the prior policy struggles with grasping the door handle, FAC persistently secures the handle before pulling, significantly improving performance. FAC's impressive results demonstrate the efficiency of our framework and highlight the potential of leveraging abundant prior knowledge for embodied agents using the RLFP framework.


\subsection{Manipulation tasks in Simulations}
\label{sec:experiments:sim}

\textbf{Environments.}
We conduct experiments in 8 tasks from simulated robotics environment Meta-World \citep{meta-world}, widely recognized for their ability to test diverse manipulation skills \citep{rot}.
We average the success rates over 10 evaluation episodes across 3 runs with different seeds. 

\textbf{Acquiring Foundation Prior.} 
(1) Prior Success-Reward: Few foundation models can distinguish success behaviors in embodied AI, and GPT-4V often fails with simulation images. Therefore, we use the ground-truth 0-1 success reward in simulation and distill a noisy success-reward model for ablation in Sec. \ref{sec:experiments:ablation}.
(2) Prior Policy: To show that the prior model can be agnostic to form, we use a diffusion-based policy prior, following the UniPi \citep{unipi} pipeline, which generates videos using diffusion models and infers actions from an inverse dynamics model. For efficiency, we offline distill a policy model from videos generated by the open-source Seer \citep{seer} model, which predicts videos conditioned on images and language instructions. While in-domain fine-tuning is ideally unnecessary, current models fail in the simulator. Thus, we fine-tuned Seer with 10 example videos per task, though these videos are much noisier than the 200k used by UniPi, as shown in the supplementary materials.
(3) Prior Value: We use the VIP model \citep{vip} with the same setup as in the real robot experiments. Implementation details of the diffusion policy are in App. \ref{app:acquring_prior_sim}.

\textbf{Baselines.} We benchmark our method against the following baselines: \textbf{(1)} Vanilla DrQ-v2 \citep{drq-v2}, with manually designed rewards from the suite; \textbf{(2)} R3M \citep{r3m}, VIP \citep{vip}, where we integrate the DrQ-v2 with either the R3M or VIP visual representation backbones. These baselines also rely on manually designed rewards from the suite; \textbf{(3)} UniPi \citep{unipi}; \textbf{(4)} The distilled policy from UniPi.

\textbf{Performance Analysis in Meta-World.}
We compared our method to baselines on 8 Meta-World tasks with 1M frames. FAC achieved 100\% success rates in all tasks, with 7/8 requiring fewer than 100k frames (about 1 hour of training), and the harder bin-picking task needing under 400k frames. In contrast, baseline methods fail to reach 100\% success on most tasks. As shown in Fig. \ref{fig:full}, FAC significantly outperforms baselines in both sample efficiency and success rates. While DrQ-v2 can complete some tasks, it learns much slower than FAC. Additionally, we warm up the actor with 10 success demos from the diffusion-based prior policy before training. Although the improvement in DrQ-v2 with warmup is noticeable (Fig. \ref{fig:drq}), it falls behind FAC, which benefits from policy prior.

R3M and VIP backbones provide visual representation prior knowledge for RL, but perform worse than DrQ-v2, likely due to the loss of plasticity in pre-trained models \citep{d2022sample}. As UniPi and the distilled foundation prior policy lack environment interaction during training, they are represented as horizontal lines in Fig. \ref{fig:full}. While UniPi outperforms the distilled prior in most environments, since the latter is learned from UniPi, both are still far inferior to FAC.

\begin{figure}
\begin{center}
\vskip -1.5cm
\includegraphics[width=0.95\textwidth]{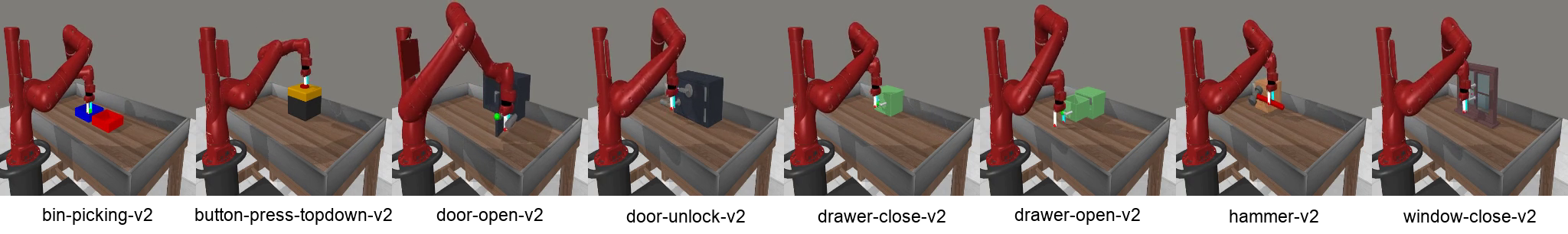}
\includegraphics[width=0.95\textwidth]{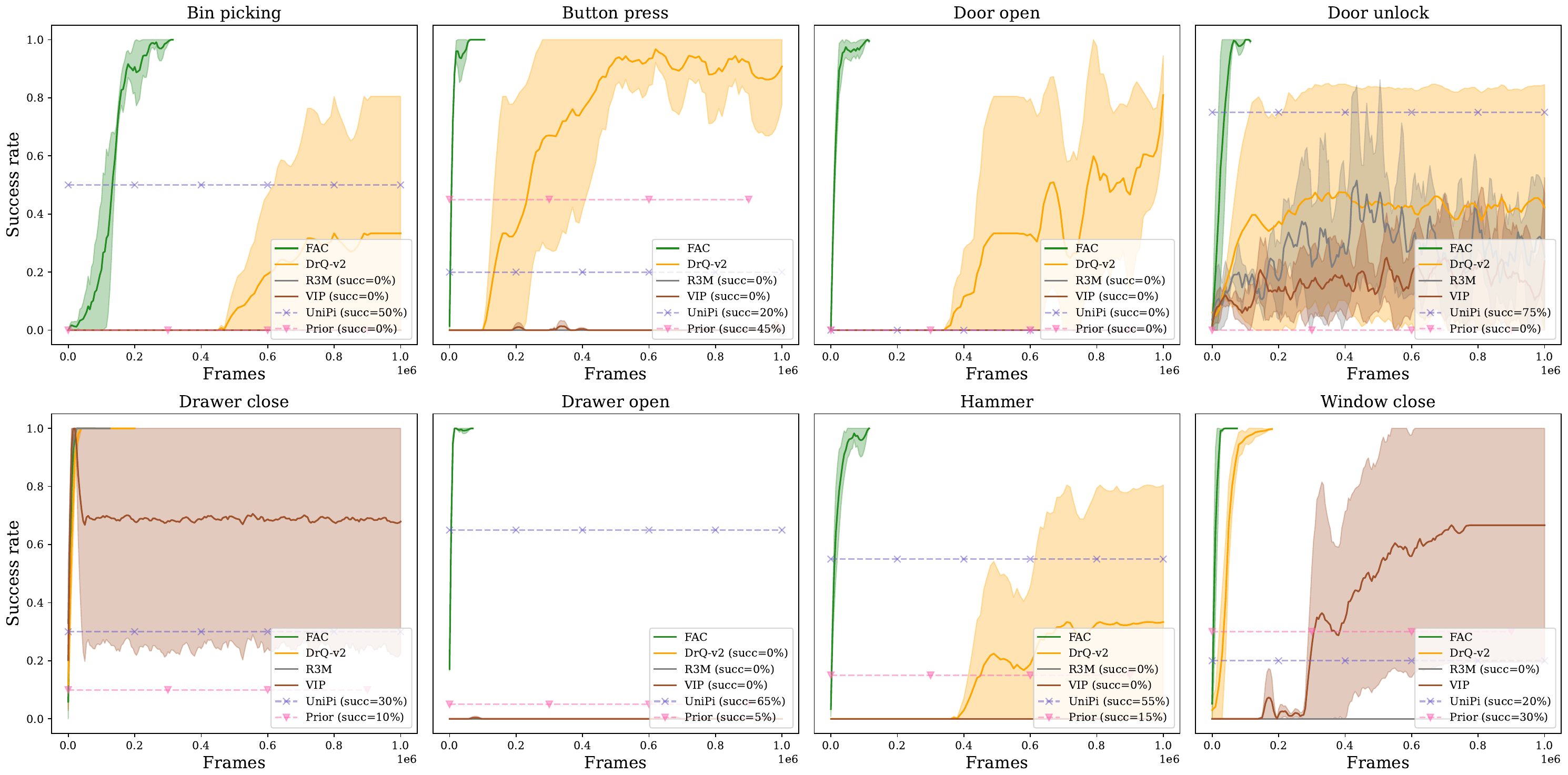}
\end{center}
\vskip -.3cm
\caption{\textbf{Success Rate Curves for the 8 Tasks in Meta-World.} Our method consistently achieves \textbf{100\% success rates} across all tasks, under the constrained performance of the policy prior model. It significantly outperforms the baselines with manual-designed rewards.
}
\label{fig:full}
\end{figure}

\begin{figure}
    \centering
    \includegraphics[width=\linewidth]{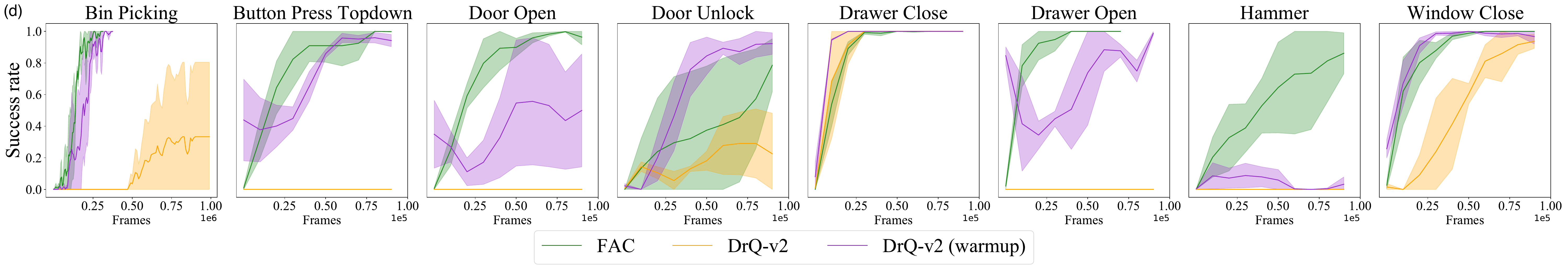}
    \caption{Comparison to the DrQ-v2 (warmup), which warm up the actor by 10 collected success demos from the prior policy model. FAC achieves better performance across the 8 tasks generally.}
    \label{fig:drq}
    \vskip -.5cm
\end{figure}

\subsection{Ablation Study in Simulations}
\label{sec:experiments:ablation}

In this section, we answer the following questions: (I) What's the importance of the three proposed priors? (II) How does FAC perform with noisier priors?

\textbf{Ablation of Each Foundation Prior.} 
To assess the importance of each foundation prior, we conducted experiments by removing them individually and comparing the results to the full method, along with an ablation study on the success buffer (Fig. \ref{fig:ablation-all} (a)). We found the reward prior to be the most crucial. Without it, performance drops significantly or fails, as the reward function reduces to shaping rewards, making policies indistinguishable. Without the policy prior, the agent struggles with difficult tasks like bin-picking and door-opening, and convergence slows on most tasks. Removing the value prior or success buffer reduces sample efficiency, especially for bin-picking, as the value prior helps guide policies under noisy policy priors. Success trajectory imitation further improves sample efficiency. Overall, using all foundation priors yields the best results.

\begin{figure}
\begin{center}
\vskip -1.5cm
\includegraphics[width=1\textwidth]{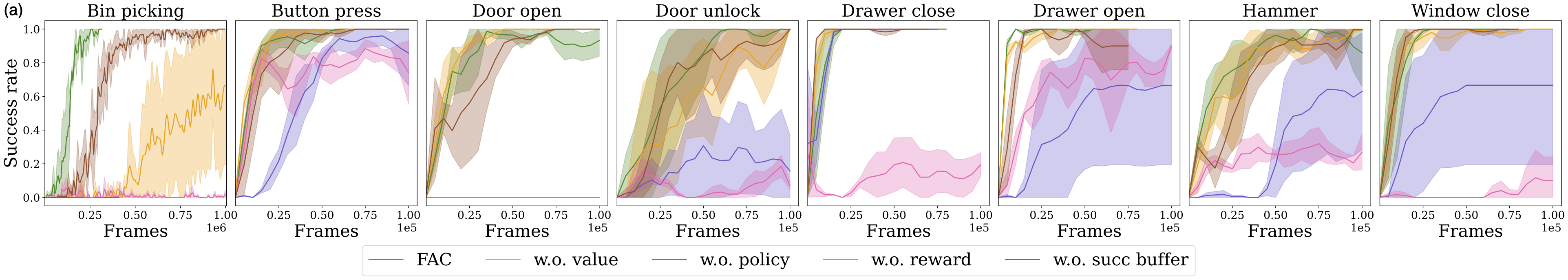}
\includegraphics[width=1\textwidth]{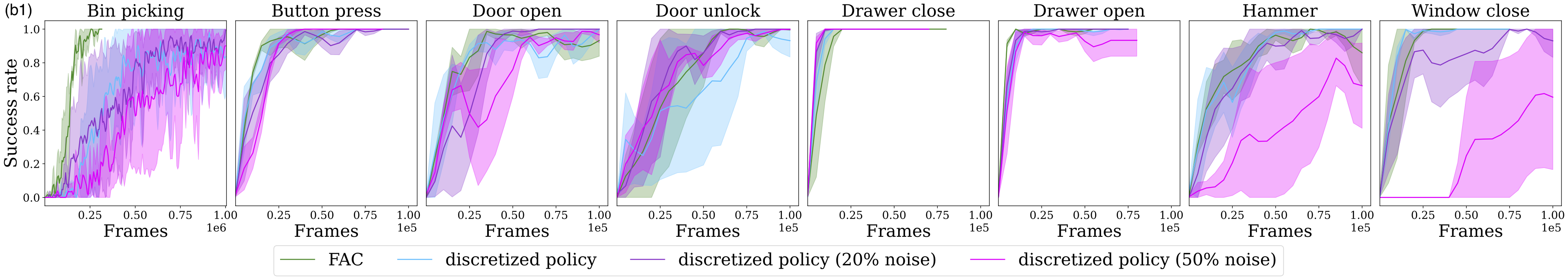}
\includegraphics[width=1\textwidth]{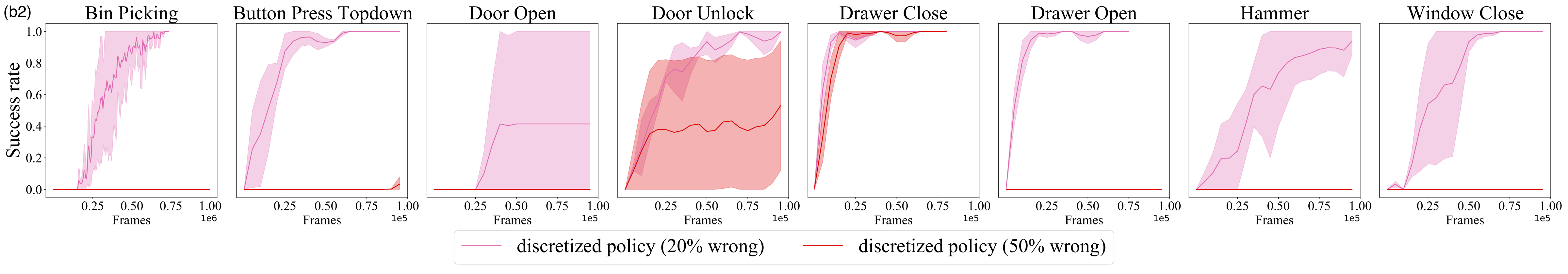}
\includegraphics[width=1\textwidth]{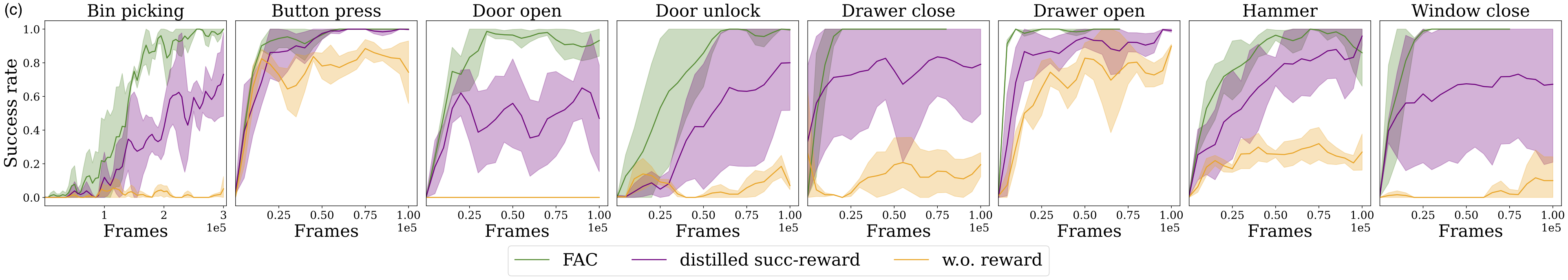}
\end{center}
\caption{\textbf{Results of Ablation Study} (a) Three Foundation Priors and the Success Buffer. (b) The Quality of Policy Prior. (c) The Quality of the success-reward Prior. 
}
\vskip -0.5cm
\label{fig:ablation-all}
\end{figure}

\textbf{FAC with Various Quality of Foundation Priors.} 
As previous ablations show, without value prior knowledge, FAC's sample efficiency decreases, suggesting that a better value prior can further boost performance. We now explore FAC's robustness to the quality of policy and success reward priors. First, we create noisier policy priors by discretizing each action dimension into ${-1, 0, +1}$, providing only rough directional information, which we call the discretized policy. We also add uniform noise to the discretized actions at 20\% and 50\% probabilities. As shown in Fig. \ref{fig:ablation-all} (b1), the discretized policy (\textcolor{blue}{blue} curve) performs similarly to the original (\textcolor{green}{green} curve), except for harder tasks like bin-picking and door-unlocking, which take more frames to reach 100\% success. Adding noise further reduces performance, but even with 50\% noise, FAC still achieves 100\% success in many environments. We also tested robustness with systematically wrong policy priors, where actions have a 20\% or 50\% chance of being inverted (e.g., -1 to 1) (Fig. \ref{fig:ablation-all} (b2)). FAC performs well with 20\% wrong direction but struggles at 50\%, where misleading information is abundant. This shows FAC's robustness to moderate systematic noise.

To assess the impact of success-reward prior quality, we distilled a proxy model using 50k offline data from the replay buffer across all 8 tasks. The model, conditioned on task embeddings, has a 1.7\% false positive and 9.9\% false negative error on the evaluation datasets. We replaced the oracle 0-1 success reward with the predicted reward. As shown in Fig. \ref{fig:ablation-all} (c), FAC with the 50k-image proxy model experiences only a limited performance drop compared to the oracle reward and outperforms FAC without success rewards. This demonstrates that FAC performs well even with a noisy success-reward prior.
In conclusion, our ablation studies show that FAC is resilient to variations in the quality of foundation priors. The higher the quality, the more sample-efficient FAC becomes.

\section{Discussion}
\label{sec:discussion}

In this paper, we introduce a novel framework, termed Reinforcement Learning from Foundation Priors (RLFP), which leverages policy, value, and success-reward prior knowledge for RL. Additionally, we also detail the implementation of this concept within actor-critic methods, introducing the Foundation-guided Actor-Critic (FAC) approach. 
Extensive experiments on real robots and simulated environments demonstrate the effectiveness of RLFP in efficient autonomous learning.

\textbf{Limitation and Future Work} This work has limitations, such as relying on human engineering for designing low-level skills, prompts, and code policy demos. Additionally, we fine-tune diffusion models with in-domain data for better prior knowledge. However, as foundation models improve, future methods may only require simple task instructions without fine-tuning. Future exploration can focus on building more accurate and broadly applicable foundation priors and incorporating richer priors into the RLFP framework. For example, humans can predict future states, and integrating the predictive knowledge from dynamic foundation models could enhance policy learning.


\clearpage
\acknowledgments{This work is also supported by the National Key R\&D Program of China (2022ZD0161700). This work is supported by the Ministry of Science and Technology of the People´s Republic of China, the 2030 Innovation Megaprojects "Program on New Generation Artificial Intelligence" (Grant No. 2021AAA0150000).}


\bibliography{example}  

\newpage
\appendix
\section{Appendix}
\section*{Appendix Table of Contents}
\begin{itemize}
  \item Appendix \ref{app:reward_shaping}: Reward Shaping in FAC
  \item Appendix \ref{app:exp_details}: Experimental Details of FAC
  \item Appendix \ref{app:acquring_priors}: Details of Acquring Foundation Priors
  \item Appendix \ref{app:time_analysis}: Running Clock Time Analysis
  \item Appendix \ref{app:more_ablation}: More Ablations Results
  \item Appendix \ref{app:proof}: Proof of the Optimality under Policy Regularization
\end{itemize}

\subsection{Reward Shaping in FAC}
\label{app:reward_shaping}
In this work, we apply the value prior knowledge in Actor-Critic algorithms in the format of reward shaping. 
Reward shaping guides the RL process of an agent by supplying additional rewards for the MDP \citep{dorigo1998robot, mataric1994reward, randlov1998learning}. In practice, it is considered a promising method to speed up the learning process for complex problems. \citet{reward-shaping} introduce a formal framework for designing shaping rewards. Specifically, we define the MDP $\mathcal{G} = (\mathcal{S}, \mathcal{A}, \mathcal{P}, \mathcal{R})$, where $\mathcal{A}$ denotes the action space, and $\mathcal{P} = \text{Pr}\{s_{t+1}|s_t, a_t\}$ denotes the transition probabilities.
Rather than handling the MDP $\mathcal{G}$, the agent learns policies on some transformed MDP $\mathcal{G}'=(\mathcal{S}, \mathcal{A}, \mathcal{P}, \mathcal{R}')$, $\mathcal{R}' = \mathcal{R} + F$, where $F$ is the shaping reward function. When there exists a state-only function $\Phi: \mathcal{S} \rightarrow \mathbb{R}^1$ such that $F(s, a, s') = \gamma \Phi(s') - \Phi(s)$ ($\gamma$ is the discounting factor), the $F$ is called a \textbf{potential-based shaping function}. \citet{reward-shaping} prove that the potential-based shaping function $F$ has optimal policy consistency under some conditions as follows:

\begin{theorem}
\label{theorem:reward-shaping}
\citep{reward-shaping} Suppose that $F$ takes the form of $F(s, a, s') = \gamma \Phi(s') - \Phi(s)$, $\Phi(s_0) = 0$ if $\gamma=1$, then for $\forall s \in \mathcal{S}, a \in \mathcal{A}$, the potential-based $F$ preserve optimal policies and we have:
\begin{equation}
    \begin{aligned}
        Q^*_{\mathcal{G}'}(s, a) &= Q^*_{\mathcal{G}}(s, a) - \Phi(s) \\
        V^*_{\mathcal{G}'}(s) &= V^*_{\mathcal{G}}(s) - \Phi(s)        
    \end{aligned}
\end{equation}
\end{theorem}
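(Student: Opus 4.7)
The plan is to prove the claim for an arbitrary policy first and only then specialize to the optimum, because the key identity becomes completely transparent once one writes out the shaped return along a trajectory. First I would fix any (possibly stochastic) policy $\pi$ and any start state $s$, and write the expected discounted return under the shaped MDP $\mathcal{G}'$ as
\begin{equation*}
V^{\pi}_{\mathcal{G}'}(s) \;=\; \mathbb{E}_{\pi,\mathcal{P}}\!\left[\sum_{t=0}^{T-1} \gamma^{t}\bigl(\mathcal{R}(s_t,a_t,s_{t+1}) + F(s_t,a_t,s_{t+1})\bigr)\,\Big|\,s_0=s\right],
\end{equation*}
where $T$ is the horizon (possibly $\infty$). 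The heart of the argument is that the $F$-contribution telescopes: substituting $F(s_t,a_t,s_{t+1}) = \gamma \Phi(s_{t+1}) - \Phi(s_t)$ into $\sum_t \gamma^t F(s_t,a_t,s_{t+1})$ collapses the sum to $\gamma^{T}\Phi(s_T) - \Phi(s_0)$, which is a path-dependent boundary term only.

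Next I would take the expectation and handle the boundary term under the two cases stated in the theorem. When $\gamma<1$ and $\Phi$ is bounded, $\gamma^T \Phi(s_T) \to 0$ as $T\to\infty$; when $\gamma=1$, the stated hypothesis $\Phi(s_0)=0$ together with treating $s_0$ as an absorbing terminal state makes the residual vanish as well. In either case one obtains $V^{\pi}_{\mathcal{G}'}(s) = V^{\pi}_{\mathcal{G}}(s) - \Phi(s)$, with the crucial observation that the shift $-\Phi(s)$ does not depend on $\pi$. Taking the supremum over $\pi$ on both sides yields $V^{*}_{\mathcal{G}'}(s) = V^{*}_{\mathcal{G}}(s)-\Phi(s)$ and, simultaneously, $\arg\max_\pi V^{\pi}_{\mathcal{G}'}(s) = \arg\max_\pi V^{\pi}_{\mathcal{G}}(s)$, so that optimal policies are preserved. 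For the Q-function I would then unroll one Bellman step:
\begin{equation*}
Q^{\pi}_{\mathcal{G}'}(s,a) = \mathbb{E}_{s'}\!\bigl[\mathcal{R}(s,a,s') + \gamma\Phi(s') - \Phi(s) + \gamma V^{\pi}_{\mathcal{G}'}(s')\bigr],
\end{equation*}
substitute the shifted value identity, observe that the $\gamma\Phi(s')$ from $F$ cancels against the $-\gamma\Phi(s')$ from $V^{\pi}_{\mathcal{G}'}(s')$, and read off $Q^{\pi}_{\mathcal{G}'}(s,a) = Q^{\pi}_{\mathcal{G}}(s,a) - \Phi(s)$; specializing to $\pi = \pi^{*}$ gives the second displayed equality.

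The main obstacle I anticipate is not the algebra of the telescoping itself, which is clean, but rigorously justifying the limit $\gamma^{T}\Phi(s_T)\to 0$ in expectation; this needs either a boundedness assumption on $\Phi$ or an integrability argument together with the dominated convergence theorem, and it is precisely the reason why the statement includes the side condition $\Phi(s_0)=0$ when $\gamma=1$ (ensuring any surviving boundary term is harmless at a terminal absorbing state). A second minor subtlety is that the identity for $Q^{*}$ uses that maximizing over $a$ commutes with the additive constant $-\Phi(s)$, which is immediate but worth stating explicitly so the policy-invariance conclusion is airtight. Once these points are in place, the application in FAC is direct: taking $\Phi(s)=M_{\mathcal{V}}(s,\mathcal{T})$ shows that augmenting the sparse reward with $F(s,s',\mathcal{T})=\gamma M_{\mathcal{V}}(s',\mathcal{T})-M_{\mathcal{V}}(s,\mathcal{T})$ leaves the optimal policy of $\mathcal{G}_{1}$ unchanged.
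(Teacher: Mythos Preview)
The paper does not include its own proof of this theorem; it is stated with a citation to \citep{reward-shaping} and followed only by the remark that ``the potential-based shaping rewards treat every policy equally,'' so there is nothing in-paper to compare against line by line. Your proposal is the standard trajectory-telescoping argument and is correct: the identity $\sum_{t=0}^{T-1} \gamma^t F(s_t,a_t,s_{t+1}) = \gamma^T \Phi(s_T) - \Phi(s_0)$ is exactly right, the boundary-term discussion identifies the correct two regimes, and the one-step Bellman unrolling for $Q$ is clean. For reference, the original Ng--Harada--Russell proof proceeds slightly differently, by directly verifying that $Q^*_{\mathcal{G}}(s,a) - \Phi(s)$ satisfies the Bellman optimality equation for $\mathcal{G}'$ and invoking uniqueness of the fixed point; your route has the advantage that it first yields the per-policy identity $V^{\pi}_{\mathcal{G}'}(s) = V^{\pi}_{\mathcal{G}}(s) - \Phi(s)$, which makes the paper's ``treat every policy equally'' remark an immediate consequence rather than a separate corollary. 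One small wording fix: in the $\gamma=1$ case, the symbol $s_0$ in the hypothesis denotes the absorbing (terminal) state, not the initial state, so you should phrase the boundary argument as ``$\Phi$ vanishes at the absorbing sink'' rather than conflating $s_0$ with the start of the trajectory.
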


The theorem indicates that the potential-based shaping rewards treat every policy equally. Thus it does not prefer $\pi_{\mathcal{G}}^*$. 
Moreover, under the guidance of shaping rewards for the agents, a significant reduction in learning time can be achieved. In practical settings, the real-valued function $\Phi$ can be determined based on domain knowledge.

\subsection{Experimental Details of FAC}
\label{app:exp_details}
Since FAC is built upon DrQ-v2, the hyper-parameters of training the actor-critic model are the same as DrQ-v2 \citep{drq-v2}. The n-step TD target value and the action in Eq. \ref{eq:actor_fac} are as follows, where $\Bar{\theta}_k$ are the moving weights for Q target networks.
\begin{equation}
    \begin{aligned}
        y &= \sum_{i=0}^{n-1} \gamma^i r_{t+i} + \gamma^n \min_{k=1,2} Q_{\Bar{\theta}_k}(s_{t+n}, a_{t+n}), \\
        a_t &= \pi_\phi(s_t) + \epsilon, \epsilon \sim \text{clip} (\mathcal{N}(0, \sigma^2), -c, c)
    \end{aligned}
\end{equation}

\textbf{Hyper-parameters in FAC} Morever, the observation shape is $84 \times 84$ on real robots and in simulation. For better representations of the scene, we use a wrist camera on real robots. In simulation, we stack 3 frames and repeat actions for 2 steps, while we do not stack frames or repeat actions on real robots.
In Meta-World, we follow the experimental setup of \citep{rot}. Specifically, the horizon length is set to 125 frames for all tasks except for bin-picking and button-press-topdown, which are set to 175. The total frames of the 8 tasks are 100k, except for the task bin-picking, which is set to 1M. Notably, we set the same camera view of all the tasks for consistency. On real robots, we set the horizon of the 5 tasks (Pick Place, Open Door, Water Plants, Unscrew Bottle Cap, Play Golf) to 40, 40, 50, 60, and 25, respectively. We evaluate them every 10 minutes, and the positions of the objects vary in 5cm in Pick Place, Open Door, Watering Plants, and 3cm in Unscrew Bottle Cap and Play Golf. We reset the objects manually because the current foundation model cannot deal with reset problems.
The difference in the hyper-parameters between the real robots and the simulation is as follows:

\begin{table}[h]
    \caption{The Main Difference of the Hyper-parameters of FAC on real robots and in simulation.}
    \label{tab:param_diff_fac}
\begin{center}
\begin{small}
\centering
\scalebox{1.0}{
\centering
\begin{tabular}{lcc}
\toprule
Parameter & On real robots & In simulation \\
\midrule
Frame Stack & 1 & 3 \\
Action Repeat & 1 & 2 \\
Seed Frames & 10 Trajectory & 4000 \\
Std of Actor & 0.1 & 1 $\rightarrow$ 0.1 \\
Feature\_dim in DrQ-v2 & 512 & 50 \\
UTD ratio & 20 & 1 (N/A) \\
Choose better action under $Q$ & Yes & No \\
Use wrist camera & Yes & No \\
\bottomrule
\end{tabular}
}
\end{small}
\vskip -.5cm
\end{center}
\end{table}

The training loss curves of the task bin-picking and door-open for reference, as shown in Fig. \ref{fig:training_loss}.

\begin{figure}
    \centering
    \includegraphics[width=\linewidth]{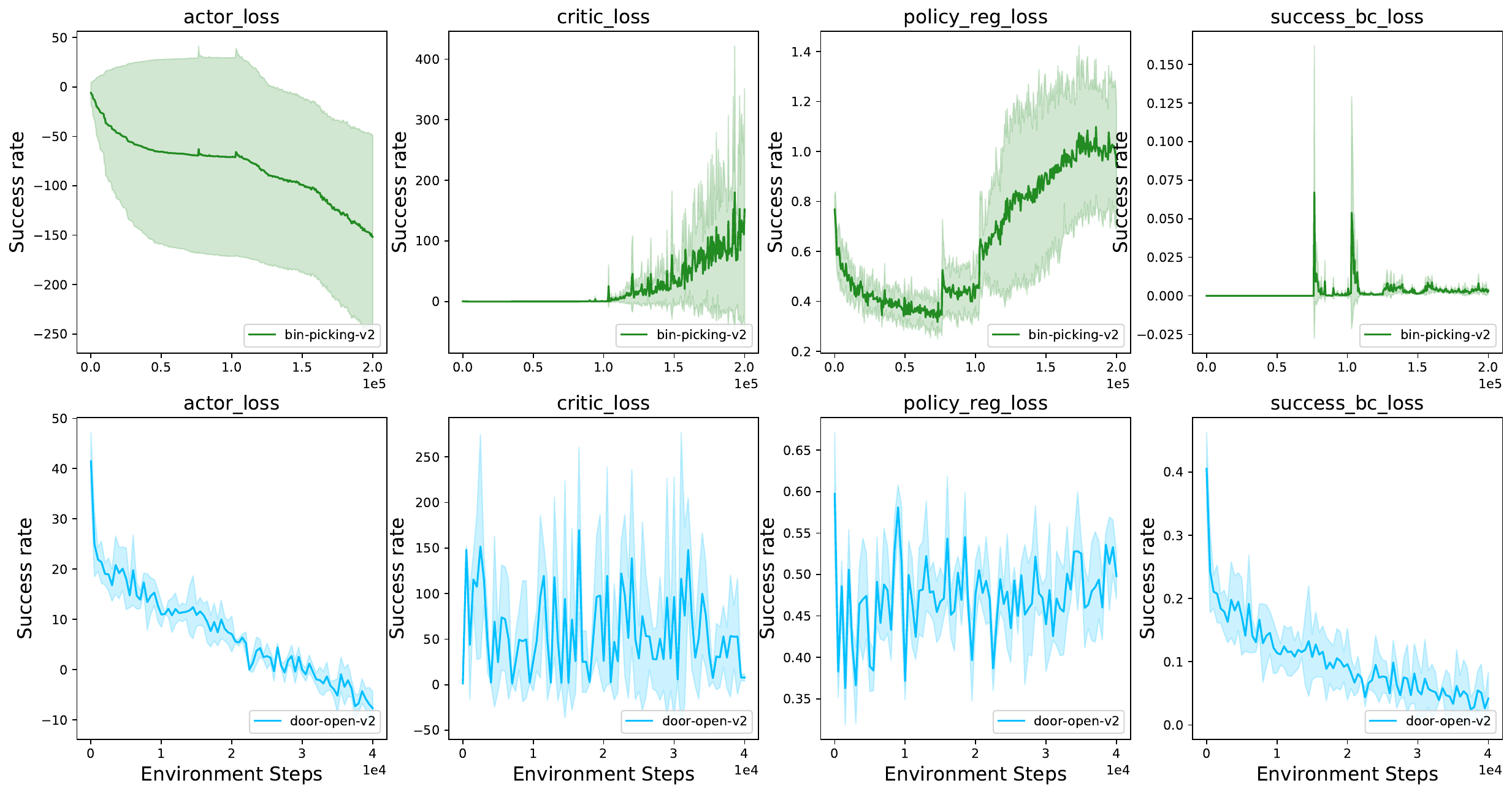}
    \caption{Training loss curves of bin-picking and door-open.}
    \label{fig:training_loss}
\end{figure}

\textbf{More Experimental Setup Details on real robots}

Here we give the task description of the 5 tasks on Franka Arm. For the task Unscrew Bottle Cap, we will fix the bottle when the arm tries to unscrew the cap.
\begin{itemize}
    \item Pick Place: \textit{Pick up the purple eggplant and place it onto the blue plate.}
    \item Open Door: \textit{There is a white cabinet on the table and there is a cucumber in it. The door can only be opened from the outside. Please open the door by a large margin so that the cucumber can be seen.}
    \item Watering Plants: \textit{There are a blue watering kettle and a potted plant. Please help me watering the plant and keep watering still.}
    \item Unscrew Bottle Cap: \textit{ There are a plastic bottle with a green cap and a pink plate on the table. The bottle will be fixed on the table, so that you cannot lift the bottle. Please help me unscrew the bottle cap and place it on the pink plate. You mush know how to unscrew the bottle cap, anticlockwise or clockwise?}
    \item Play Golf: \textit{There is a golf hole and a golf ball on the table. The robotic arm is tied by a golf club. Please shoot the golf ball into the set.}
\end{itemize}

The ground-truth success reward functions of each task are as follows, which is leveraged in evaluation.
\begin{itemize}
    \item Pick Place: \textit{dist(xy position of the eggplant, xy position of the blue plate center) < radius \& dist(z of the eggplant, z of the blue plate) < $\epsilon$, $\epsilon$ = 0.01.}
    \item Open Door: \textit{dist(xy position of the handle, xy position of the cabinet center) > $\epsilon$, $\epsilon$ = 0.1.}
    \item Watering Plants: \textit{dist(z of the sprout, z of the plants) > $\epsilon_1$ \& dist(xy of the sprout, xy of the plants) < radius \& abs(horizontal orientation of the sprout) < $\epsilon_2$ degree, $\epsilon_1 = 0.1, \epsilon_2 = 15$.}
    \item Unscrew Bottle Cap: \textit{dist(xyz of the cap, xyz of the bottle center) > $\epsilon$ \& (cap in the plate), $\epsilon = 0.15$. (`cap in the plate' is the same as the function in pick place).}
    \item Play Golf: \textit{(golf in the golf hole). (`golf in the golf hole' is the same as the function in pick place).}
\end{itemize}

\textbf{Reward functions for vanilla RL} Since it is hard to get the object state in real time, we assume the object state will be attached to the robot state after some actions. For example, we have the initial position of the eggplant, the position, and the size of the target plate. After the gripper reaches the pos of the eggplant and takes action "grasp", the pos of the eggplant will be set the same as the eef pos. The reward function is the normal pick-place reward function. For the door-open, we get the handle initial position, and once the arm is reaching in the bound of 5cm of the handle and closes the gripper, the position of the handle will be attached to the eef of the robot. Therefore, the reward functions are based on the distance between the gripper and some target position sequences, including the position distance and the orientation distance. We give the reward functions of the Door Open and Play Golf as examples, which are harder tasks.

Open Door: $R_\text{approach} = \text{max}\left(0, 1 - tanh(10 \times \mathbf{p}_{\text{gripper}} - \mathbf{p}_{\text{handle}}\|\right)$, encourages the robot to get closer to the handle by diminishing rewards as the distance decreases. $R_{\text{orient}} = 1 - \frac{\|\mathbf{q}_{\text{gripper}} - \mathbf{q}_{\text{handle}}\|}{2}$, incentivizes aligning the robot's gripper orientation with that of the handle. $R_{\text{pull}} = \text{max}\left(0, \theta_{\text{door}} - \theta_{\text{prev\_door}}\right)$, increases with the door's opening angle. $R_{\text{open}} = \min\left(\frac{\theta_{\text{door}}}{90.0}, 1.0\right)$, prioritizes achieving a 90-degree opening. A small time penalty, $R_{\text{time}} = -0.01$, is applied per time step to encourage faster task completion. The total reward is then formulated as:
$R_{\text{total}} = 0.01 * R_{\text{approach}} + 0.05 * R_{\text{orient}} + 0.1 * R_{\text{pull}} + 0.1 * R_{\text{open}} + R_{\text{time}}$.

Play Golf: We use the golf club's length and the end-effector's (eef) position to determine the clubhead's state. The ball back position is defined as the back of the ball, which is the optimal point to strike. $R_{\text{approach}} = \text{max}\left(0, 1 - \tanh(10 \times \|\mathbf{p}_{\text{clubhead}} - \mathbf{p}_{\text{ball\_back}}\|)\right)$, encourages the robot to position the clubhead at the back of the ball. $R_{\text{orient}} = 1 - \frac{\|\mathbf{q}_{\text{clubhead}} - \mathbf{q}_{\text{target}}\|}{2}$, which incentivizes proper alignment of the clubhead for an accurate strike. Additionally, the reward for the clubhead's velocity at the moment of impact, $R_{\text{velocity}} = \text{max}\left(0, 1 - \tanh(10 \times \|\mathbf{v}_{\text{clubhead}} - \mathbf{v}_{\text{desired}}\|)\right)$, ensures that the clubhead reaches the optimal speed to strike the ball effectively. So the final reward is: $R_{\text{total}} = 0.1 * R_{\text{approach}} + 0.5 * R_{\text{orient}} + 0.5 * R_{\text{velocity}} + R_{\text{time}}$.

\newpage

\subsection{Details of Acquring Foundation Priors}
\label{app:acquring_priors}
\subsubsection{Acquring Foundation Priors on Franka}
\label{app:acquring_prior_real}

\begin{figure}[t]
\begin{center}
\includegraphics[width=0.9\textwidth]{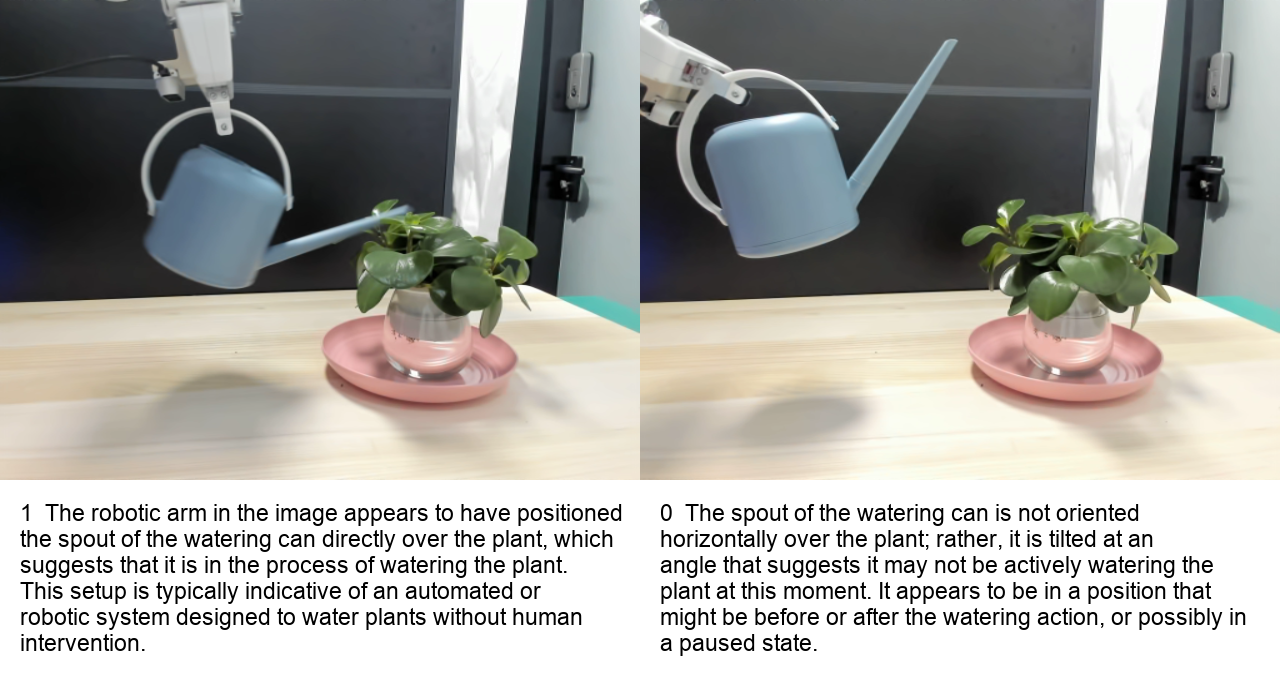}
\end{center}
\vskip -.1cm
\caption{Example of success identification by GPT-4V in task Watering Plants. Given the question. \textit{Does the robotic arm water the plants? Attention, if the spout orients horizontally over the plant, you should output 1 for yes. Otherwise, you should output 0 for no without any space first. Be sure of your answer and explain your reason afterward.} The foundation model can give the correct success reward as well as the corresponding explanations.}
\label{fig:success_example}
\end{figure}

\textbf{Examples of Success-reward Discrimination Prompts}
It is significant to feed the success-reward model by high-resolution images, which we choose $512 \times 512$. Here we give an example of the success-reward prompts in FAC towards the task Watering Plants, by leveraging GPT-4V, as follows: \textit{Does the robotic arm water the plants? Attention, if the spout orients over horizontally over the plant, you should output 1 for yes. Otherwise, you should output 0 for no without any space first. Be sure of your answer and explain your reason afterward.} Then we can receive the corresponding success-reward as well as the explanations, shown in Fig. \ref{fig:success_example}.

\textbf{Examples of Code Policy Prompts.}
Before code generation, we define some primitive skills. We implement the corresponding interface between primitive skills and control systems, so that they can be executed directly by the robot. The primitive low-level skills and the corresponding params are as follows:
\begin{itemize}
    \item move\_to x y z: move the gripper to the position (x, y, z).
    \item grasp: grasp with the gripper.
    \item release: release the gripper.
    \item rotate\_clockwise: rotate gripper clockwise by 90 degree.
    \item rotate\_anticlockwise: rotate gripper anticlockwise by 90 degree.
    \item orient\_half\_horizontally: change the orientation of the gripper half horizontally to make it grasp more easily. 
    \item orient\_vertically: change the orientation of the gripper vertically to make ti grasp something vertically.
    \item strike\_front: strike something to the front of the arm.
    \item strick\_back: strike something to the back of the arm.
\end{itemize}

Firstly, we input prompts that include a task description, the initial scene, and the expected format for the generated code.
Notably, the task used in the example differs from the five tasks we evaluate.
For the downstream tasks, we rely on the GPT-4V to generate the code policy based on new images and the corresponding task descriptions, shown in Fig. \ref{fig:unscrew}. 
We also specify the range of object positions within the scene, enabling the VLM to estimate the position of the target object and generate code that includes skill plans and position parameters. Limited to the current ability of GPT-4V, it can build reasonable code policies under small changes in object positions.

\begin{wrapfigure}{R}{0.5\textwidth}
  \begin{center}
  \vskip -.3cm
    \includegraphics[width=\linewidth]{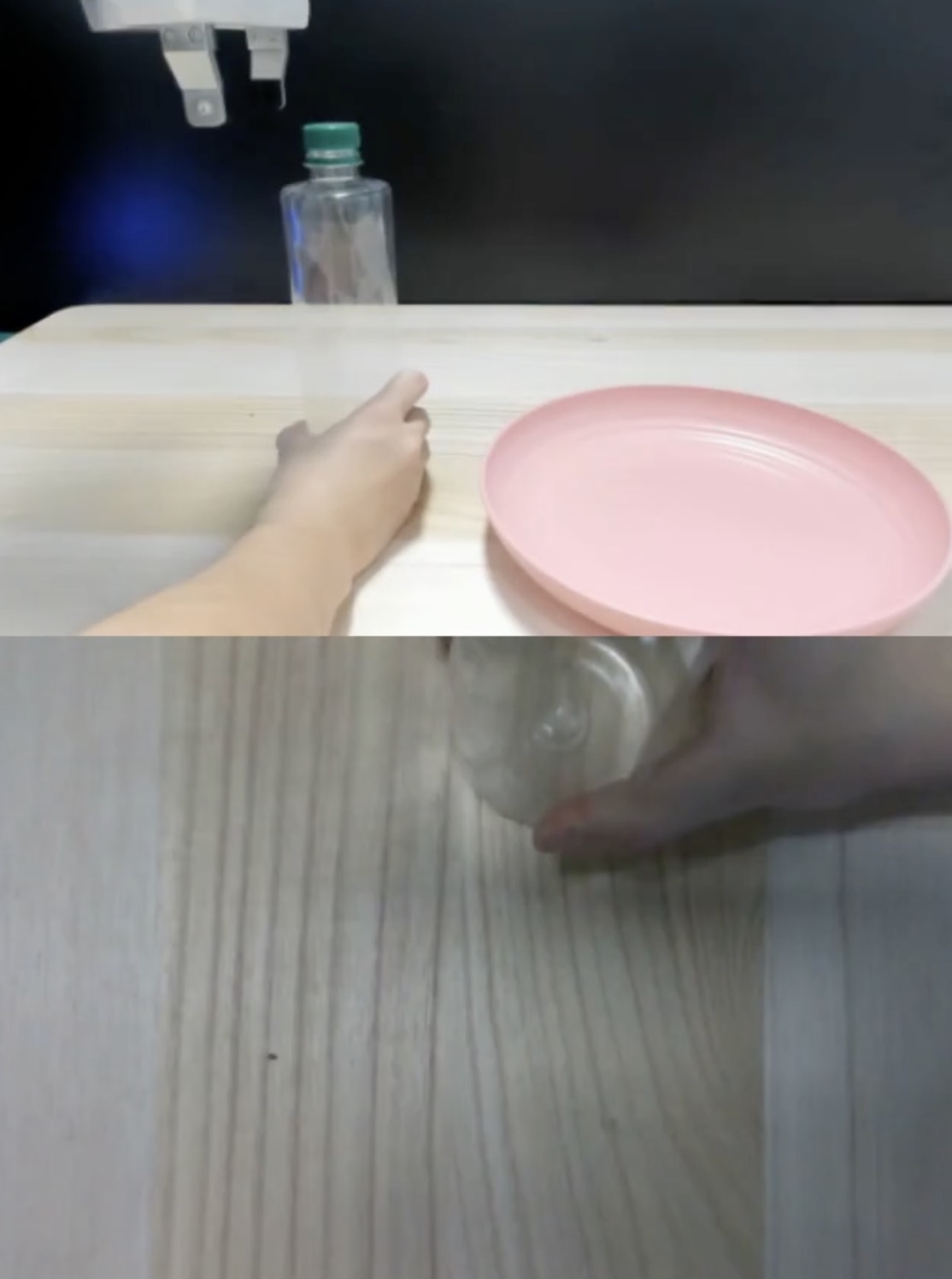}
  \end{center}
    \caption{Initial observation image of the task Unscrew Bottle Cap, including the fixed camera view and the wrist camera view.}
    \label{fig:unscrew}
\end{wrapfigure}

Then, we feed in the current image and the task descriptions: \textit{There are a plastic bottle with a green cap and a pink plate on the table. The bottle will be fixed on the table, so that you cannot lift the bottle. Please help me unscrew the bottle cap and place it on the pink plate. You mush know how to unscrew the bottle cap, anticlockwise or clockwise?} Then we can receive the \textbf{generated code policy} as shown in the code text, which can make the correct rotation to unscrew the cap. 

On real robots, the action space is 7-dim, consisting of the delta position (x, y, z) of the gripper, the delta orientation (roll yaw pitch) of the gripper, and the grasp/release of the gripper. We use the OSC control mode. As for the policy output of the code policy, it generates plans like ``move\_to x1 y1 z1''. Then we use robotics control toolkits to compute the delta position/orientation for the robot, which is the prior action. Once the plan ``move\_to x1 y1 z1'' is finished (close to epsilon in practice), the code policy will execute the next plan. In this way, we can acquire the corresponding prior action in each step. The KL constraint is between the prior action and the RL policy. Consequently, the GPT-4V generates code plans that consist of the primitive skills, and then the primitive skill can generate delta prior actions by robotics control codes.

\begin{minipage}[t]{0.45\textwidth}
\textbf{\tiny Policy Description Prompts for Unscrew Bottle Cap}

\begin{lstlisting}[language=Python]
# Query: pick up the plastic bottle
# Some info... (skip here)
# Here are the skills that you can choose:
# 1. move_to x y z: move the gripper to the position of (x, y, z)
# 2. grasp: grasp with the gripper
# 3. release: release the gripper
# 4. rotate_clockwise: rotate the gripper clockwise by only 90 degree
# 5. rotate_anticlockwise: rotate the gripper anticlockwise by 90 degree
# 6. reset: move back to the initial position
# Attention: For safety, you cannot rotate clockwise or rotate
# anticlockwise twice continuously. Think carefully about how to
# finish the task.

def code_policy():
    # Obtain the position of plastic bottle. Estimate the position.
    object_pos = [0.5, 0., 0.12]
    x1, y1, z1 = object_pos
    # Pick up
    plans = [
        # move above the bottle
        f'move_to {x1:.2f} {y1:.2f} {z1 + 0.05:.2f}',
        # move to the bottle
        f'move_to {x1:.2f} {y1:.2f} {z1:.2f}',
        # grasp the bottle
        'grasp',
        # lift the cucumber
        f'move_to {x1:.2f} {y1:.2f} {z1 + 0.2:.2f}',
    ]
    return plans
\end{lstlisting}

\end{minipage}
\hfill
\begin{minipage}[t]{0.45\textwidth}
\textbf{\tiny Generated Code Policy for Unscrew Bottle Cap}

\begin{lstlisting}[language=Python]
def code_policy():
    # Bottle cap position (estimated near the top of the bottle: height of the bottle + table z-axis)
    cap_pos = [0.5, 0., 0.26]
    x1, y1, z1 = cap_pos

    # Plate position (estimated to be at the provided position)
    plate_pos = [0.75, 0., 0.06]
    x2, y2, z2 = plate_pos

    # Plans to unscrew the bottle cap and place it on the plate
    plans = [
        # Move above the bottle cap
        f'move_to {x1:.2f} {y1:.2f} {z1 + 0.05:.2f}',
        # Move to the bottle cap
        f'move_to {x1:.2f} {y1:.2f} {z1:.2f}',
        # Grasp the bottle cap
        f'grasp',
        # Rotate anticlockwise to unscrew
        f'rotate_anticlockwise',
        # Lift the cap
        f'move_to {x1:.2f} {y1:.2f} {z1 + 0.05:.2f}',
        # Move above the plate
        f'move_to {x2:.2f} {y2:.2f} {z1 + 0.05:.2f}',
        # Move down to the plate
        f'move_to {x2:.2f} {y2:.2f} {z2:.2f}',
        # Release the cap onto the plate
        f'release',
        # Move away
        f'move_to {x2:.2f} {y2:.2f} {z2 + 0.2:.2f}'
    ]
    return plans
# done
\end{lstlisting}

\end{minipage}

\subsubsection{Acuqring Foundation Priors on Meta-world}
\label{app:acquring_prior_sim}

\textbf{Training Inverse Dynamics Model} We build the inverse dynamics model $\rho(s_t, s_{t+1})$ as follows:
\begin{itemize}
    \item Takes inputs as $s_t, s_{t+1}$, with the shape of $3 \times 84 \times 84$.
    \item A Downsample Model, which outputs the representation with the shape of $128 \times 2 \times 2$.
    \item Flatten the planes into 512-dimension vectors.
    \item 1 Linear layer with ReLU, which outputs the 64-dimension vectors.
    \item 1 Linear layer with ReLU, which outputs the 64-dimension vectors.
    \item 1 Linear layer with ReLU, which outputs the action dimension vectors (equal to 4).
\end{itemize}
The Downsample model is designed as follows:
\begin{itemize}
    \item 1 convolution with stride 2 and 128 output planes, output resolution $42 \times 42$. (ReLU)
    \item 2 residual block with 128 planes.
    \item Average pooling with stride 2 (kernel size is 3), output resolution $21 \times 21$. (ReLU)
    \item 2 residual block with 128 planes.
    \item Average pooling with stride 3 (kernel size is 5), output resolution $7 \times 7$. (ReLU)    
    \item 2 residual block with 128 planes.
    \item Average pooling with stride 3 (kernel size is 4, no padding), output resolution $2 \times 2$. (ReLU)
\end{itemize}
We use the 1M replay buffer trained from vanilla DrQ-v2 for each task and collect them together as the dataset. 

\begin{figure}
\begin{center}
\includegraphics[width=0.95\textwidth]{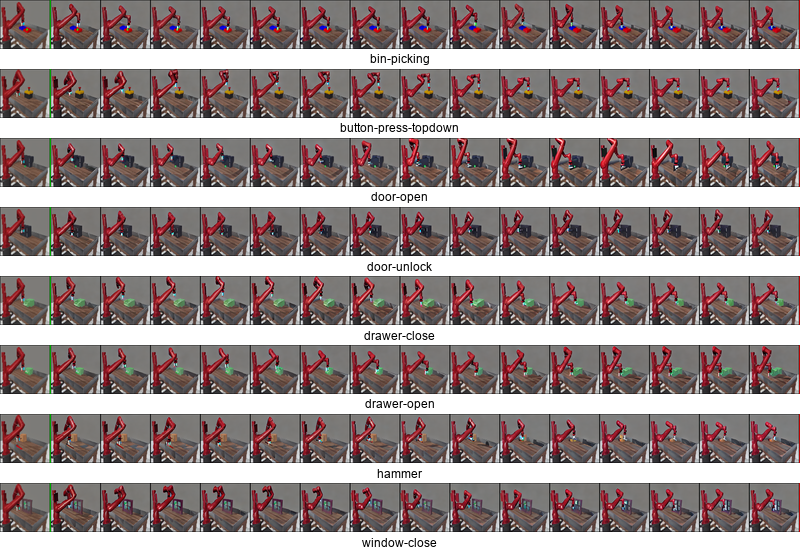}
\end{center}
\vskip -.25cm
\caption{The generated videos from the diffusion model Seer given the initial images as well as task descriptions.}
\label{fig:diffusion_results}
\end{figure}

\textbf{Distilling Diffusion Policy Foundation Models} 
We use the fine-tuned VLM Seer to collect 100 videos for each task (1000 in bin-picking-v2), and use the trained inverse dynamics model $\rho(s_t, s_{t+1})$ to labal pseudo actions for the videos. The example of generated videos is illustrated in Fig. \ref{fig:diffusion_results}. Then, we do supervised learning to train the policy foundation prior model under the dataset, which is conditioned on the task. For convenience, we encode the task embedding as a one-hot vector, which labels the corresponding task. Thus, the size of the task embedding is 8. Here, the architecture of the distilled policy model is as follows, where the downsample model is the same as that in the inverse dynamics model.
\begin{itemize}
    \item Takes inputs as $s_t, e_t$, with the shape of $3 \times 84 \times 84$ and $1 \times 8$.
    \item A Downsample Model, which outputs the representation with the shape of $128 \times 2 \times 2$.
    \item Flatten the planes into 512-dimension vectors.
    \item Concat the 512 vector and the task embedding into 520-dimension vectors.
    \item 1 Linear layer with ReLU, which outputs the 64-dimension vectors.
    \item 1 Linear layer with ReLU, which outputs the 64-dimension vectors.
    \item 1 Linear layer with ReLU, which outputs the action dimension vectors (equal to 4).
\end{itemize}

The training hyper-parameters of the inverse dynamics model $\rho(s_t, s_{t+1})$ and the distilled policy model $M_\pi(s_t, \mathcal{T})$ are in Table \ref{tab:param}. The hyper-parameters of training FAC agents are the same as DrQ-v2 \citep{drq-v2}.

\begin{table}[h]
    \caption{Hyper-parameters for Building the Policy Foundation Models in Meta-World.}
    \label{tab:param}
\begin{center}
\begin{small}
\centering
\scalebox{1.0}{
\centering
\begin{tabular}{lcc}
\toprule
Parameter & Training $\rho$ & Training $M_\pi$ \\
\midrule
Minibatch size & 256 & 256 \\
Optimizer & AdamW & AdamW \\
Optimizer: learning rate & 1e-4 & 5e-4 \\
Optimizer: weight decay & 1e-4 & 1e-4 \\
Learning rate schedule & Cosine & Cosine \\
Max gradient norm & 1 & 1 \\
Training Epochs & 50 & 300 \\
\bottomrule
\end{tabular}
}
\end{small}
\end{center}
\end{table}

\subsection{Running Clock Time Analysis}
\label{app:time_analysis}
\subsubsection{Analysis of Extra Running Clock Time in FAC}
\label{sec:experiments:time_complexity} 

\begin{table}[b]
\centering
\caption{\textbf{Running Time of Each Part in FAC on Real Robots}. On real robots, the most time-consuming part of the foundation model operation is generating policy code from GPT-4V. The total amortized time of foundation model operations is a little larger to the normal operations.}
\begin{tabular}{lcc}
\toprule
Clock Time on \textbf{Real Robots} (s) & Per Trajectory & Amontized Step \\
\midrule
\textit{Normal Operation}  \\ \\
\textbf{Action Move} of the gripper in the real world & - & \textbf{0.320} \\
\textbf{Action Inference} from policy $\pi(a|s)$ & - & 0.001 \\
\textbf{Total} & - & \underline{0.321} \\
\midrule 
\textit{Foundation Model Operation}  \\ \\
\textbf{Code Policy Prior}: policy code generation from GPT-4V & 14.76 & 0.290 \\
\textbf{Code Policy Prior}: action generation by code policy & - & 0.001 \\
\textbf{Success Prior}: from GPT-4V & 4.48 & 0.089 \\
\textbf{Value Prior}: value inference from VIP & - & 0.013 \\
\textbf{Total} & - & \underline{0.393} \\
\bottomrule 
\end{tabular}
\label{tab:clock_time_real}
\end{table}

\begin{table}[t]
\centering
\caption{\textbf{Running Time of Each Part in FAC on Meta-World}. In the simulation, the most time-consuming part of the foundation model operation is the value inference from VIP. The total amortized time of foundation model operations is smaller than the normal operations.}
\begin{tabular}{lcc}
\toprule
Clock Time in \textbf{Simulation} (s) & Per Trajectory & Amontized Step \\
\midrule
\textit{Normal Operation}  \\ \\
\textbf{Action Move} of the gripper in simulation & - & \textbf{0.009} \\
\textbf{Action Inference} from policy $\pi(a|s)$ & - & 0.001 \\
\textbf{Total} & - & \underline{0.010} \\
\midrule 
\textit{Foundation Model Operation}  \\ \\
\textbf{Diffusion Policy Prior}: video generation (\textbf{done offline}) & \st{11.62} & \st{0.230} \\
\textbf{Diffusion Policy Prior}: action generation & - & 0.001 \\
\textbf{Success Prior}: from the success model & - & 0.001 \\
\textbf{Value Prior}: value inference from VIP & - & 0.007 \\
\textbf{Total} & - & \underline{0.008} \\
\bottomrule 
\end{tabular}
\label{tab:clock_time_simulation}
\end{table}

Since it brings great performance and efficiency increases for RL by leveraging the foundation prior knowledge, it is interesting and significant to investigate the clock time of running the foundation models in practice. Here we discuss the time consumed by each part from the foundation models in real and simulation. In practice, some foundation models are accessed only at the beginning or at the final step, which results in a much smaller amortization time over the trajectory. For example, GPT-4V detects success only in the final step. To make fair comparisons, we record the running clock time per trajectory and get the amortized time per step. We set the trajectory length to 50 steps for convenience. To make clear comparisons, we conclude the operations without access to foundation models into \textit{Normal Operation} part, and those operations with access to foundation models into \textit{Foundation Model Operation} part. All the results are evaluated in a single 3090 GPU, and we calculate the average clock time during training among 50 trajectories on average from all the tasks.

Tab. \ref{tab:clock_time_real} and \ref{tab:clock_time_simulation} are the results of real robots and simulations concerning clock time during training. Here, \textbf{Action Move} means the process of taking actions by the robotic arm. The \textbf{Action Inference} means the inference time of the learned policy model. Notably, in the simulation, we offline distill a prior policy from the diffusion model due to the heavy time-cost of video generation (in Sec. \ref{sec:method:acquire_prior}). Thus, the clock time of the Diffusion Policy Prior for video generation is not included in the training time, although it is much more time-consuming than the other operations in simulation.

Generally, in both environments, the time complexity of running foundation models is about 2 times more than the vanilla setting in amortization. And the most time-consuming part is the \textbf{Action Move}. In foundation model operations, generating the code policy from GPT-4V is the slowest on real robots, while the value inference is the slowest in simulation.

Moreover, we record the total training time spent by FAC and vanilla DrQ-v2 to make comparisons. We average the 3 seeds running for each task where we train about 100k steps on a single 3090 GPU. Vanilla DrQ-v2 takes 32m 2s on average; while FAC takes 1h 11min 17s on average, which is 2 times more than that without foundation priors. As shown in Fig. \ref{fig:full}, even at the same clock time, FAC is better than vanilla DrQ-v2 with ground-truth rewards. FAC can solve 7/8 tasks in 100k frames, while the vanilla DrQ-v2 can only solve 2/8 in 200k frames. Furthermore, FAC can solve more tasks than the baselines. Consequently, we conclude that our method significantly outperforms the baselines concerning training clock time.

\subsection{More Ablations Results}
\label{app:more_ablation}
\begin{figure}
\begin{center}
\includegraphics[width=0.95\textwidth]{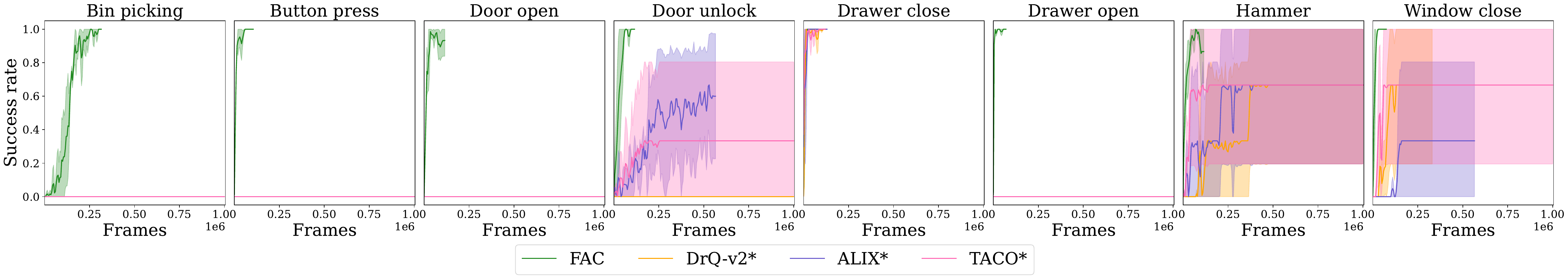}
\end{center}
\vskip -.25cm
\caption{Here `*' in DrQ-v2, ALIX, and TACO means only the 0-1 success reward is provided from the environment, which is different from the original settings in their works. FAC can work for all the tasks while the other baselines fail in half of them. It is significant and sample-efficient to utilize prior knowledge for reinforcement learning.}
\label{fig:ablation-only-reward}
\end{figure}

\textbf{Comparison to More Baselines with Success-reward Only}
Here, we also add some baselines under the setting, where only the success-reward foundation prior is provided. We choose the recent SOTA model-free RL algorithms on Meta-World ALIX \citep{ALIX} and TACO \citep{taco}, as well as the baseline DrQ-v2 \citep{drq-v2} with the success-reward only. Notably, ALIX and TACO are both built on DrQ-v2. The results are shown in Fig. \ref{fig:ablation-only-reward}, where `*' means that only a 0-1 success reward is given.
Only FAC can achieve 100\% success rates in all the environments. DrQ-v2*, ALIX*, and TACO* can not work on hard tasks such as bin-picking and door-open. FAC requires fewer environmental steps to reach 100\% success rates, as shown in the Figure. The results on the new baselines can verify the significance and efficiency of utilizing the abundant prior knowledge for RL in a way.

\textbf{Comprasion to BC Policies from Policy Prior} Another interesting baseline of leveraging foundation policy prior is to collect success demonstrations by the prior policy $M_\pi$ and train a behavior cloning policy. Here we collect 100 successful demonstrations on Meta-World for each task by the prior policy $M_\pi$. Noticing that the policy prior achieves 0\% success rate on some tasks, we collect 100 trajectories then. The results are in Tab. \ref{tab:bc_policy}. Generally, we find that the learned BC policy cannot outperform the prior policy itself in all the tasks. More significantly, in some tasks, such as window\-close\-v2, the BC policy achieves much worse results than the prior policy. This is because the prior policy works only in some certain scenarios, which introduces bias in the collected demos. Although the prior policy fails in some scenarios, it can introduce some informative actions, which can be much better than random actions. Therefore, it is more reasonable to leverage the prior actions as guidance for RL.

\begin{table}[]
    \centering
    \caption{The performance comparison between the distilled policy prior and the learned bc policy on Meta-World.}
    \begin{tabular}{c|c|c|c}
    \toprule
     & Prior Policy $M_\pi$ & BC Policy & FAC \\
    \midrule
        bin-picking-v2 & 0 & 0 & 1.00 \\
        button-press-topdown-v2 & 0.45 & 0.15 & 1.00 \\
        door-open-v2 & 0 & 0 & 1.00 \\ 
        door-unlock-v2 & 0 & 0 & 1.00 \\
        drawer-close-v2 & 0.10 & 0.10 & 1.00 \\
        drawer-open-v2 & 0.05 & 0 & 1.00 \\
        hammer-v2 & 0.15 & 0.10 & 1.00 \\
        window-close-v2 & 0.30 & 0.05 & 1.00 \\
    \bottomrule
    \end{tabular}
    \label{tab:bc_policy}
\end{table}

\subsection{Proof of the Optimality under Policy Regularization}
\label{app:proof}
\begin{lemma}
\label{lemma1}
The policy $\pi_m = \frac{1}{1+\beta}\hat{\pi}_{\phi_m}+\frac{\beta}{1+\beta}M_\pi$, is the solution to the optimization problem of the actor shown in Equation \ref{eq:actor_fac}.
\end{lemma}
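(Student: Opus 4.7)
The plan is to parameterize Gaussian policies by their means (since the standard deviation $\hat{\sigma}$ is fixed and appears symmetrically in both KL terms), reduce the two KL penalties to explicit quadratic forms in the mean, and then use first-order optimality. Concretely, for a Gaussian $\pi_\phi = \mathcal{N}(\mu_\phi, \hat{\sigma}^2)$ and target Gaussian $\mathcal{N}(\mu^\star, \hat{\sigma}^2)$, the KL reduces (up to an additive constant) to $\tfrac{(\mu_\phi - \mu^\star)^2}{2\hat{\sigma}^2}$. Thus $\alpha \mathcal{L}_{\text{succ}}$ becomes a quadratic centered at the success action, and $\beta \mathcal{L}_{\text{reg}}$ becomes a quadratic centered at the mean of $M_\pi$, while the Q-term contributes a data-dependent curve in $\mu_\phi$.

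Next I would define $\hat{\pi}_{\phi_m}$ as the minimizer of the actor loss with the regularization term removed, so that the gradient of $-\mathbb{E}[\min_k Q_{\theta_k}] + \alpha \mathcal{L}_{\text{succ}}$ vanishes at $\hat{\pi}_{\phi_m}$. Writing the first-order stationarity condition for the full regularized loss, the gradient of the Q-plus-success piece at $\pi_m$ must balance the gradient $\beta(\mu_m - \mu_{M_\pi})/\hat{\sigma}^2$ from $\beta \mathcal{L}_{\text{reg}}$. If the Q-plus-success piece is (locally) quadratic around $\hat{\pi}_{\phi_m}$ with the same effective curvature $1/\hat{\sigma}^2$ induced by the KL geometry on fixed-variance Gaussians, its gradient reduces to $(\mu_m - \hat{\pi}_{\phi_m})/\hat{\sigma}^2$, and solving the balance equation $(\mu_m - \hat{\pi}_{\phi_m}) + \beta(\mu_m - \mu_{M_\pi}) = 0$ gives exactly $\mu_m = \tfrac{1}{1+\beta}\hat{\pi}_{\phi_m} + \tfrac{\beta}{1+\beta}M_\pi$, which is the claimed identity once lifted back from means to Gaussian policies.

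The main obstacle is justifying the quadratic structure of the Q-plus-success piece needed to match curvatures: the success-imitation loss already has the right form by virtue of being a KL with variance $\hat{\sigma}^2$, but the term $-\mathbb{E}[\min_k Q_{\theta_k}(s,\pi_\phi(s))]$ is not quadratic in $\mu_\phi$ in general. My plan is to address this either by arguing in a neighborhood of the optimum via a second-order Taylor expansion and absorbing the Hessian into the definition of an effective $\hat{\sigma}$, or by relying on the actor-critic convention where the policy gradient is treated as an affine signal at each update step, so that the stationarity condition becomes exactly linear in $\mu_\phi$ and the weighted-average identity becomes exact. Once this reduction is in place, the remaining steps are routine algebra on the linear stationarity condition.
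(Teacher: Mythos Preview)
Your approach is sound in spirit and arguably more transparent than the paper's, but it differs in two noteworthy ways. First, the paper defines $\hat{\pi}_{\phi_m}$ as the \emph{pure} $Q$-maximizer, $\hat{\pi}_{\phi_m}=\arg\max_{\hat{\pi}_\phi}\mathbb{E}_{\tau\sim\hat{\pi}_\phi}[Q(s,a)]$ in the limit $m\to\infty$, and silently drops the $\alpha\,\mathcal{L}_{\text{succ}}$ term from the actor objective when deriving the convex combination; you instead absorb $\mathcal{L}_{\text{succ}}$ into the definition of $\hat{\pi}_{\phi_m}$, which is more faithful to Eq.~(\ref{eq:actor_fac}) but means your $\hat{\pi}_{\phi_m}$ is not literally the paper's object. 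Second, where you write and solve a linear first-order stationarity condition on Gaussian means directly, the paper takes an indirect route: it adds the constant $\mathbb{E}_{\tau\sim\hat{\pi}_{\phi_m}}Q(s,a)$ to the objective, rewrites the resulting $Q$-difference via importance sampling as $\mathbb{E}_{\tau\sim\hat{\pi}_{\phi_m}}\!\bigl[\tfrac{\hat{\pi}_{\phi_m}-\pi}{\hat{\pi}_{\phi_m}}\,Q(s,a)\bigr]$, argues heuristically that minimizing this is equivalent to minimizing $\|\hat{\pi}_{\phi_m}-\pi\|$, uses the fixed-variance Gaussian assumption to replace $\textbf{KL}(\pi,M_\pi)$ by $\|\pi-M_\pi\|$, and finally invokes Lemma~1 of \citep{cheng2019control} to obtain the weighted-average form. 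Both routes hinge on the same step you honestly flag as an obstacle---that the $Q$ term behaves like a quadratic (equivalently, a norm) centered at its optimum with curvature matching the KL geometry---so neither is fully rigorous; the paper simply buries this in its passage from the importance-sampling identity to the norm surrogate, whereas you name it explicitly and propose a Taylor/affine workaround. Your direct stationarity argument buys clarity and avoids the external citation; the paper's detour through importance sampling and the Cheng et al.\ lemma buys a connection to prior work but is no more justified on the critical point.
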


\begin{proof}

First, $\hat{\pi}_{\phi_m}$ is the RL policy optimized by standard RL optimization problem in m-th iteration, illustrated in the following equation.
\begin{equation}
\label{eq:RLopt}
    \hat{\pi}_{\phi_m} = \arg \max_{\hat{\pi}_\phi} \mathbb{E}_{\tau \sim \hat{\pi}_\phi}[Q(s,a)] \quad \text{as} \ m \to \infty 
\end{equation}

Note that the following derivation omits the variance of Gaussian distribution for convenience. This is because the variance is independent of the state in the deterministic Actor-Critic algorithms DrQ-v2 algorithm.

According to Equation \ref{eq:actor_fac}, the policy $\pi_m$ can be represented as:
\begin{equation}
\label{eq:ouropt}
\begin{aligned}
    \pi_m &= \arg \min_\pi [-\mathbb{E}_{\tau \sim \pi}Q(s,a) + \beta \textbf{KL}(\pi,M_\pi)]
\end{aligned}
\end{equation}

Adding $\mathbb{E}_{\tau \sim \hat{\pi}_{\phi_m}}Q(s,a)$ in Equation \ref{eq:ouropt}, we can rewrite it as:
\begin{equation}
\label{eq:ouroptv2}
\begin{aligned}
\pi_m &= \arg \min_\pi [\mathbb{E}_{\tau \sim \hat{\pi}_{\phi_m}}Q(s,a)-\mathbb{E}_{\tau \sim \pi}Q(s,a) +\beta \textbf{KL}(\pi,M_\pi)]
\end{aligned}
\end{equation}
Considering $\mathbb{E}_{\tau \sim \hat{\pi}_{\phi_m}}Q(s,a)$ is not related to the optimization objective, the above equation holds.
Intuitively, we can observe that there exist two parts in the objective. About the first part, we can use importance sampling to obtain:
\begin{equation}
    \label{eq:is}
    \mathbb{E}_{\tau \sim \hat{\pi}_{\phi_m}}Q(s,a)-\mathbb{E}_{\tau \sim \pi}Q(s,a)= \mathbb{E}_{\tau \sim \hat{\pi}_{\phi_m}}[\frac{\hat{\pi}_{\phi_m}-\pi}{\hat{\pi}_{\phi_m}} Q(s,a)]
\end{equation}
Since $\hat{\pi}_{\phi_m}$ can be represented as $\arg \max_{\hat{\pi}_\phi} \mathbb{E}_{\tau \sim \hat{\pi}_\phi}[Q(s,a)]$ when $m$ approaching to infinity, the minimum of $\mathbb{E}_{\tau \sim \hat{\pi}_{\phi_m}}Q(s,a)-\mathbb{E}_{\tau \sim \pi}Q(s,a)$ can be achieved when the minimum of the following equation exits.
\begin{equation}
    \label{eq:Qtopi}
    \begin{aligned}
    \arg \min_\pi \| \hat{\pi}_{\phi_m}-\pi\| &\iff \arg \min_\pi \| \arg \max_{\hat{\pi}_\phi} \mathbb{E}_{\tau \sim \hat{\pi}_\phi}[Q(s,a)]- \pi \| \text{as} \ m \to \infty 
    \end{aligned}
\end{equation}
Let us see the second part in Equation \ref{eq:ouroptv2}. $\pi$ and $M_\pi$ are Gaussian distributions and the variances of distributions are constant in our framework.
Thus, $\textbf{KL}(\pi,M_\pi) \iff \| \pi- M_\pi \|$ holds.

Hereafter, we can reformulate Equation \ref{eq:ouroptv2} as follows:
\begin{equation}
    \label{eq:chengopt}
    \pi_m = \arg \min_\pi [\| \arg \max_{\hat{\pi}_\phi} \mathbb{E}_{\tau \sim \hat{\pi}_\phi}[Q(s,a)]- \pi \| + \beta \| \pi- M_\pi \|]
\end{equation}

Based on the Lemma 1 in \citep{cheng2019control}, the solution to the above problem is derived as:
\begin{equation}
    \label{eq:pim}
    \pi_m = \frac{1}{1+\beta}\hat{\pi}_{\phi_m}+\frac{\beta}{1+\beta}M_\pi
\end{equation}
To this end, the policy $\pi_m$ is the solution to the proposed optimization problem in this paper. 
\end{proof}

\begin{theorem}
\label{theorem:policyv2}
Let $D_{\text{sub}} = D_{\text{TV}}(\pi_{\text{opt}}, M_{\pi})$ be the bias between the optimal policy and the prior policy, the policy bias $D_{TV}(\pi_{m}, \pi_{\text{opt}})$ in $m$-th iteration can be bounded as follows:
\begin{equation}
    \begin{aligned}
        D_{\text{TV}}(\pi_{\text{m}}, \pi_{\text{opt}}) &\ge D_{\text{sub}} - \frac{1}{1+\beta} D_{\text{TV}}(\hat{\pi}_{\phi_m}, M_{\pi}) \\
        D_{\text{TV}}(\pi_{\text{m}}, \pi_{\text{opt}}) &\le \frac{\beta}{1+\beta} D_{\text{sub}} \quad \text{ as } m \rightarrow \infty
    \end{aligned}
\end{equation}
\end{theorem}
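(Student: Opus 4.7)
The plan is to leverage Lemma \ref{lemma1} directly, which gives the explicit form $\pi_m = \frac{1}{1+\beta}\hat{\pi}_{\phi_m} + \frac{\beta}{1+\beta}M_\pi$, and then exploit the fact that total variation distance is (up to a factor of $\tfrac{1}{2}$) a norm, so it interacts cleanly with convex combinations.

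For the \textbf{upper bound}, I would first pass to the limit $m \to \infty$, where $\hat{\pi}_{\phi_m}$ is defined in Equation \ref{eq:RLopt} as the unregularized RL optimum and hence equals $\pi_{\text{opt}}$. Substituting into the formula from Lemma \ref{lemma1} gives
\begin{equation*}
\pi_m - \pi_{\text{opt}} = \frac{1}{1+\beta}\pi_{\text{opt}} + \frac{\beta}{1+\beta}M_\pi - \pi_{\text{opt}} = \frac{\beta}{1+\beta}\bigl(M_\pi - \pi_{\text{opt}}\bigr),
\end{equation*}
and applying $D_{\text{TV}}(\cdot,\cdot) = \tfrac{1}{2}\|\cdot - \cdot\|_1$ yields exactly $D_{\text{TV}}(\pi_m,\pi_{\text{opt}}) = \tfrac{\beta}{1+\beta} D_{\text{sub}}$, which is the desired bound.

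For the \textbf{lower bound}, the key tool is the reverse triangle inequality for TV distance: $D_{\text{TV}}(\pi_m,\pi_{\text{opt}}) \ge D_{\text{TV}}(M_\pi,\pi_{\text{opt}}) - D_{\text{TV}}(\pi_m,M_\pi) = D_{\text{sub}} - D_{\text{TV}}(\pi_m,M_\pi)$. It then suffices to bound $D_{\text{TV}}(\pi_m,M_\pi)$ by computing it directly from Lemma \ref{lemma1}:
\begin{equation*}
\pi_m - M_\pi = \frac{1}{1+\beta}\hat{\pi}_{\phi_m} + \frac{\beta}{1+\beta}M_\pi - M_\pi = \frac{1}{1+\beta}\bigl(\hat{\pi}_{\phi_m} - M_\pi\bigr),
\end{equation*}
so that $D_{\text{TV}}(\pi_m,M_\pi) = \tfrac{1}{1+\beta} D_{\text{TV}}(\hat{\pi}_{\phi_m},M_\pi)$, which plugged back gives the claim.

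The proof is essentially a one-line calculation per bound once Lemma \ref{lemma1} is invoked, so there is no real technical obstacle. The only subtle point worth stating carefully is the justification that $\hat{\pi}_{\phi_m} \to \pi_{\text{opt}}$ as $m \to \infty$ (which is implicit in how $\hat{\pi}_{\phi_m}$ is defined via Equation \ref{eq:RLopt}); this is needed only for the upper bound, while the lower bound holds at every finite iteration $m$ without any limiting argument. I would also briefly note that treating $\pi_m$ and $M_\pi$ as density functions and applying the $L^1$-based definition of TV is legitimate here because Lemma \ref{lemma1} derives the convex combination in distributional form, so the factor $\tfrac{1}{1+\beta}$ pulls cleanly out of the norm.
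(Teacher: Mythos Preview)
Your proposal is correct and follows essentially the same route as the paper: invoke Lemma~\ref{lemma1} for the mixture form of $\pi_m$, apply the (reverse) triangle inequality for the lower bound, and substitute $\hat{\pi}_{\phi_m}\to\pi_{\text{opt}}$ for the asymptotic upper bound. The only cosmetic difference is that you phrase total variation via the $L^1$ norm while the paper writes it as a pointwise supremum, but the scalar factors $\tfrac{1}{1+\beta}$ and $\tfrac{\beta}{1+\beta}$ pull out identically in both formulations.
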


\begin{proof}
Note that the following derivation is most inspired by Theorem 1 in \citep{cheng2019control}.
According to Lemma \ref{lemma1}, the policy $\pi_m$ can be represented as $\frac{1}{1+\beta}\hat{\pi}_{\phi_m}+\frac{\beta}{1+\beta}M_\pi$.

Then, let us define the policy bias as $D_{TV}(\pi_m, \pi_{opt})$, and $D_{sub} = D_{TV}(\pi_{opt}, M_\pi)$. Since $D_{TV}$ is a metric that represents the total variational distance, we can use the triangle inequality to obtain:
\begin{equation}
    \label{eq:triangle}
    {D_{TV}}({\pi _m},{\pi _{opt}}) \ge {D_{TV}}({M_{\pi}},{\pi _{opt}}) - {D_{TV}}({M_{\pi}},{\pi _m})
\end{equation}
According to the mixed policy definition in Equation \ref{eq:pim}, we can further decompose the term ${D_{TV}}({M_{\pi}},{\pi _m})$:
\begin{equation}
    \label{eq:lowboud}
    \begin{aligned}
    {D_{TV}}({M _{\pi}},{\pi _m}) &= \mathop {\sup }\limits_{(s,a) \in S{\rm{x}}A} \left| {{M _{\pi}} - \frac{1}{1+\beta}{\hat{\pi} _{{\phi _m}}} - \frac{\beta}{1+\beta}{M _{\pi}}} \right|\\
 &= \frac{1}{1+\beta}\mathop {\sup }\limits_{(s,a) \in S{\rm{x}}A} \left| {{\hat{\pi} _{{\phi _m}}} - {M _{\pi}}} \right|\\
 &= \frac{1}{1+\beta}{D_{TV}}({\hat{\pi} _{{\phi _m}}},{M _{\pi}})
 \end{aligned}
\end{equation}
This holds for all $m \in \mathbb{N}$ from Equation \ref{eq:triangle} and Equation \ref{eq:lowboud}, and we can obtain the lower bound as follows:
\begin{equation}
    \label{eq:lowboundv2}
    {D_{TV}}({\pi _m},{\pi _{opt}}) \ge {D_{sub}} - \frac{1}{1+\beta}{D_{TV}}({\hat{\pi} _{{\phi _m}}},{M _{\pi}})
\end{equation}
The RL policy ${\hat{\pi} _{{\phi _m}}}$ can achieve asymptotic convergence to the (locally) optimal policy $\pi_{opt}$ through the policy gradient algorithm. In this case, we can derive the bias between the mixed policy $\pi_m$ and the optimal policy $\pi _{opt}$ as follows:
\begin{equation}
    \label{eq:upboud}
    \begin{aligned}
    {D_{TV}}({\pi _{opt}},\pi _m) &= \mathop {\sup }\limits_{(s,a) \in S{\rm{x}}A} \left| {{\pi _{opt}} - \frac{1}{1+\beta}{\hat{\pi} _{{\phi _m}}} - \frac{\beta}{1+\beta}{M _{\pi}}} \right|\\
 &= \frac{\beta}{1+\beta}\mathop {\sup }\limits_{(s,a) \in S{\rm{x}}A} \left| {{\pi _{opt}} - {M _{\pi}}} \right| \quad \quad  {\rm{as\; m}} \to \infty \\
 &= \frac{\beta}{1+\beta}{D_{TV}}({\pi _{opt}},{M _{\pi}}) \quad  {\rm{as\; m}} \to \infty  \\
 &= \frac{\beta}{1+\beta}{D_{sub}}\quad  {\rm{as\; m}} \to \infty 
\end{aligned}
\end{equation}
Therefore, we obtain the upper bound.

\end{proof}

\end{document}